\documentclass{article}

\PassOptionsToPackage{numbers, compress}{natbib}

\usepackage[preprint]{neurips_2026}

\usepackage[utf8]{inputenc} 
\usepackage[T1]{fontenc}    
\usepackage{hyperref}       
\usepackage{url}            
\usepackage{booktabs}       
\usepackage{amsfonts}       
\usepackage{nicefrac}       
\usepackage{microtype}      
\usepackage{xcolor}         

\title{A Hierarchy of Entropy-Shapley Games for Multivariate Predictive Uncertainty}

\author{%
  Niklas Koenen\\
  Leibniz Institute for Prevention Research and Epidemiology – BIPS\\
  University of Bremen, Germany\\
  \texttt{koenen@leibniz-bips.de}
  \And
  Claudia Battistin\\
  Simula Research Laboratory\\
  Oslo, Norway\\
  \And
  Jeriek Van den Abeele\\
  Telenor Research \& Innovation\\
  Fornebu, Norway\\
  \And
  Martin Jullum\\
  Norwegian Computing Center\\
  Oslo, Norway\\
}

\usepackage{bm}
\usepackage{tikz}
\usepackage{amsmath, amsthm, amssymb}
\usepackage{wrapfig}
\usepackage{lipsum}
\usepackage{caption}
\usepackage{subcaption}

\usetikzlibrary{arrows.meta, decorations.pathreplacing}

\definecolor{cL1}{HTML}{2E86C1}
\definecolor{cL2}{HTML}{E67E22}
\definecolor{cL3}{HTML}{E74C3C}
\definecolor{cGray}{HTML}{CCCCCC}

\newtheorem{theorem}{Theorem}

\newtheorem{proposition}[theorem]{Proposition}
\newtheorem{corollary}[theorem]{Corollary}
\theoremstyle{definition}

\renewcommand{\mid}{\,|\,}

\begin{document}

\maketitle

\begin{abstract}
Modern probabilistic machine learning models increasingly produce multivariate outputs with complex dependence structure, from multi-step time-series forecasts to sample path predictions. Understanding which input features drive the predictive uncertainty is important for risk-aware decisions, model diagnostics, and deciding whether the uncertainty should be mitigated or hedged against. This attribution problem requires a choice of how dependencies between output components are treated. Existing approaches reduce the output to a scalar through aggregation or projection before attribution, thereby obscuring whether features affect marginal uncertainty, dependence structure, or both, while component-wise analyses can miss dependence effects entirely. We close this gap by introducing a hierarchy of three entropy-based Shapley games that make this output-side choice explicit for any ordered multivariate outcome, ranging from per-component marginal entropy to fully joint entropy. The hierarchy isolates a cross-component attribution term that captures how each feature shifts the dependence between output components, a quantity invisible to component-wise methods. We establish a chain-rule decomposition of the joint attribution and characterize the cross-component term through conditional total correlation, providing both closed-form and sample-based estimators. Finally, we demonstrate how the framework captures differences in learned joint structure across probabilistic models from distributional regression to a zero-shot time series foundation model.
\end{abstract}

\section{Introduction}\label{sec:intro}

Over the last decade, machine learning has shifted focus from predicting low-dimensional targets, e.g., a few class labels or scalar responses, to producing or even generating multivariate outputs with rich dependence structure. For example, time-series models generate multi-step forecasts~\cite{salinas2020deepar, Zhou2021}, modern language models output text as token sequences~\cite{brown2020gpt3, touvron2023llama}, and prediction models in autonomous systems generate paths over subsequent time steps~\cite{salzmann2020trajectron, varadarajan2022multipath}. Such targets $\bm{Y} = (Y_1, \ldots, Y_T)$ are typically \emph{multivariate} with a natural ordering, where the components exhibit complex dependencies. In a day-ahead weather forecast, for example, the temperature at noon naturally depends on the preceding morning temperature, and capturing this dependence is part of the modelling task.

The shift toward more nuanced outcomes is reflected in a wave of probabilistic model classes that no longer produce multiple point predictions alone, but a \emph{joint predictive distribution} or \emph{samples} over all output components in $\bm{Y}$, as multiple realizations are typically plausible and should be viewed in conjunction (e.g., different weather scenarios). These models range from distributional regression~\cite{duan2020ngboost, rasp2018neural} and probabilistic forecasting models~\cite{salinas2020deepar} to foundation models~\cite{ansari2024chronos, woo2024moirai, das2024timesfm}, which even provide multi-step zero-shot forecasts. In such settings, the realizations' variability and predictive uncertainty are typically the quantity of interest, since the joint dependence structure of the output makes point predictions an insufficient basis for downstream decisions that must account for the full range of plausible outcomes. Hence, this uncertainty is structured, combining per-component variability with the dependence across components.

As such models are increasingly deployed in safety-critical or sensitive applications, understanding the origin of predictive uncertainty becomes as important as quantifying it. For instance, if a multi-step forecast exhibits high uncertainty, a planner needs to know whether this stems from an inherently volatile input feature (which must be hedged against) or a localized lack of historical data, which might be mitigated.
For multivariate outputs $\bm{Y}$, the attribution of uncertainty inherits this structure: features may drive predictive uncertainty in specific output components or in the dependence between them. In the weather example, this is the difference between asking which inputs drive uncertainty about the temperature at noon, and which drive the joint behavior of the noon and 1 p.m. predictions. 
Thus, explaining predictive uncertainty is not merely a diagnostic exercise: it helps determine whether uncertainty should be mitigated, monitored, or incorporated into downstream decisions.

Shapley-based feature attribution~\cite{lundberg2017, sundararajan2020many, chen2023algorithms} provides a game-theoretical framework which can be used to answer such questions by assigning feature contributions with respect to a suitable value function. Natural choices for uncertainty include predictive variance~\cite{gajewski2025varshap, Chiaburu2025} and entropy~\cite{watson2023explaining}. While these are well-defined for scalar outputs, their extension to multivariate outputs is not uniquely defined, as it requires specifying how uncertainty is aggregated or conditioned across components. 
Even for non-uncertainty attribution, existing methods typically either operate component-wise on each $Y_t$~\cite{bento2021timeshap, FrancoDeLaPea2026, nayebi2023windowshap, Nguyen2025}, ignoring dependencies between output components, or aggregate the multivariate output before attribution~\cite{zhang2024shaptime, jethani2023dontbefooled}, thereby obscuring where in the output the effect arises. 
This limitation is not merely a modeling choice, but a mathematical necessity. Recent work has proven that if a vector-valued attribution method preserves the classical Shapley axioms, it must evaluate each output component independently~\cite{biccari2026fair}.
Concretely, features that affect only the dependence structure of $\bm{Y}$ would receive an attribution of zero (see Sec.~\ref{sec:experiments-poc} for an example), exposing a structural blind spot in component-wise attribution. This leaves open how to define Shapley-based uncertainty attributions for ordered multivariate outputs in a way that separates marginal uncertainty from dependence among output components.

To close this gap, we make the choice of the value function on the output side explicit and introduce a \emph{hierarchy of entropy-based Shapley games} that differ in how they treat dependencies across output components. The hierarchy separates different structural aspects of predictive uncertainty, ranging from marginal variability to fully joint dependence. While instantiated in this paper for multi-step forecasting, the framework applies to any ordered multivariate output.

\paragraph{Contributions.}
\textbf{(1)} 
To the best of our knowledge, we introduce the first framework for multivariate predictive uncertainty attribution, proposing a hierarchy of entropy-based value functions that consistently extends scalar uncertainty attribution to multivariate outputs by making the output-side marginalization choice explicit. The hierarchy resolves a structural blind spot of standard component-wise attribution, which assigns an attribution of exactly zero to features that drive only cross-component dependence. \textbf{(2)} We establish two propositions linking the levels via a chain-rule decomposition (Prop.~\ref{prop:chain-rule-linkage}) and a total-correlation-based characterization of cross-component effects (Prop.~\ref{prop:cross-component}). \textbf{(3)} We provide closed-form expressions for the multivariate Gaussian case and discuss model-agnostic sample-based estimators for the general case, both validated against analytical baselines. \textbf{(4)} We demonstrate the framework on a range of probabilistic models, from NGBoost~\citep{duan2020ngboost} and DeepAR~\citep{salinas2020deepar} to the zero-shot foundation model Chronos-T5~\cite{ansari2024chronos}.

\section{Related Work}\label{sec:related_work}

Feature attribution is a central tool in explainable AI, with well-established post-hoc methods for assigning \emph{local}~\cite{lundberg2017, Ribeiro2016, Sundararajan2017} or \emph{global}~\cite{covert2020sage, Fisher2019} contributions to features. Various explanation targets have been considered, including the model prediction, its risk, and sensitivity~\cite{fumagalli2025unifying}. In this work, we focus on Shapley-based attributions of predictive uncertainty, in particular, the entropy. We organize the related work along two axes: \emph{what} is being attributed and \emph{how} the output structure is treated.

\textbf{Feature Attribution for Predictive Uncertainty.}
While some works consider feature-based explanations of predictive uncertainty via global risk-based approaches, e.g., conformal predictions~\cite{Mehdiyev2025} or entropy-based PFI/PDP variants~\cite{Wood2024}, we focus on Shapley-based attributions. Within the Shapley framework, the choice of the value function determines which aspect of the predictive uncertainty is attributed: residuals, predictive likelihood, and cross-entropy for risk-based approaches~\cite{Lundberg2020fromlocal, chen2018lshapley}, prediction-interval widths in conformal settings~\cite{idrissi2025unveil}, local prediction variance under input perturbations~\cite{gajewski2025varshap, Chiaburu2025}, or global variance contributions via the connection between Sobol' indices and Shapley values~\cite{Owen2014}. Closest to our setting are information-theoretic approaches, which directly leverage the domain of quantifying information and uncertainty. SHAP-KL~\cite{jethani2023dontbefooled} attributes the predictive distribution shift using the KL divergence, while InfoSHAP~\cite{watson2023explaining} uses the conditional entropy $H(Y \mid \bm{X})$ as a value function. Common to all of these approaches, however, is that uncertainty is reduced to a single scalar quantity. As a consequence, they cannot resolve how a feature contributes to different structural components of the predictive distribution, a limitation that becomes particularly relevant when the prediction itself is multivariate.

\textbf{Feature Attribution for Multivariate Outputs.}
Existing attribution approaches for multivariate outputs $\bm{Y} = (Y_1, \dots, Y_T)$ either reduce $\bm{Y}$ to a scalar before attribution by aggregating over components~\cite{zhang2024shaptime, jethani2023dontbefooled}, or projecting onto a single one~\cite{Krzyziski2023, langbein2025gradientbased}. A parallel literature on variance-based sensitivity analysis develops Sobol' indices for vector-valued outputs~\cite{Gamboa2014, Heredia2022}, but provides only global rather than local attributions. Approaches that operate on the predictive distribution face a similar restriction: \citet{tonekaboni2020} attribute the distribution shift via a KL-based decomposition, and ConformaSegment~\cite{conformasegment2025} attributes the shift in conformal interval bounds, both on scalar or per-step outputs. In this field, most approaches address sequential \emph{inputs} for scalar predictions rather than dependencies within the output and are orthogonal to our work~\cite{bento2021timeshap, nayebi2023windowshap, Nguyen2025, FrancoDeLaPea2026}. Beyond the time series domain, two recent works address the multivariate-output question more directly. Shapley Chains~\cite{Ayad2022} extend Shapley values to multi-output classification via chains, and \citet{biccari2026fair} prove a rigidity result stating that any attribution rule satisfying the classical Shapley axioms on vector-valued games must decompose component-wise across outputs. Neither addresses predictive \emph{uncertainty} nor formalizes local feature contributions to the dependency structure between output components.

To the best of our knowledge, no existing approach attributes predictive uncertainty for multivariate dependent outputs in a way that resolves both marginal feature contributions and contributions to the dependence structure. Our hierarchy of entropy-based Shapley games closes this gap.

\section{Background}\label{sec:background}

Throughout this paper, lowercase letters denote scalars (e.g., $x \in \mathbb{R}$), boldface lowercase denotes vectors (e.g., $\bm{x} \in \mathbb{R}^p$), uppercase denotes a random variable (e.g., $X \sim \mathcal{N}(0,1)$), and boldface uppercase denotes random vectors or matrices (e.g., $\bm{X} \sim \mathcal{N}(\bm{\mu}, \bm{\Sigma})$). Calligraphic letters denote a space or the support of the corresponding random variable (e.g., $X$ takes values in $\mathcal{X}$). 
Assuming all relevant probability densities exist, we use the generic notation $p(\cdot)$ throughout, relying on the arguments to specify the respective random variables (e.g., $p(\bm{y} \mid \bm{x})$).
Additionally, we write $[p] := \{1, \ldots, p\}$ for the set of feature indices, $\bar{S} := [p] \setminus S$ for the complement of $S \subseteq [p]$, $2^{[p]}$ for its power set, and $\bm{X}_S := (X_i)_{i \in S}$ for the components of $\bm{X}$ indexed by $S$, and $\bm{X}_{<t} := (X_1, \dots, X_{t-1})$ for the history of output components preceding index $t$.

\subsection{Information Theory}\label{sec:background-entropy}

To attribute predictive uncertainty to individual features, we require a quantity that captures the full distributional uncertainty of a (possibly multivariate) model prediction. Information theory provides a clear answer: the \textit{entropy}. The (differential) entropy $H(X)$ of a random variable $X$ on $\mathcal{X}$ with density $p$ measures the uncertainty over its realizations and is defined as
\begin{align}
    H(X) = \mathbb{E}_{X}\!\bigl[-\log p(X)\bigr] 
    = -\int_{\mathcal{X}} p(x)\,\log p(x)\;dx.
    \label{eq:entropy}
\end{align}
%
Low entropy indicates a predictable outcome and, conversely, high entropy indicates high uncertainty. For example, a fair coin toss with $\mathbb{P}[\text{Heads}] = \tfrac{1}{2}$ has entropy $H = -2 \cdot \tfrac{1}{2}\log\tfrac{1}{2} = \log 2 \approx 0.693$ nats, reflecting high uncertainty, while a heavily biased coin with $\mathbb{P}[\text{Heads}] = 0.99$ has near-zero entropy, since the outcome is almost predetermined.

Closely related is the conditional entropy. Given a specific observation $c ~\sim C$, the \textit{local conditional entropy} $H(X \mid C = c)$ measures the residual uncertainty in $X$ for that particular realization. Averaging over all possible values of $C$ yields the \textit{(global) conditional entropy} $H(X \mid C) = \mathbb{E}_{\tilde{C}}[H(X \mid C = \tilde{C})]$, which captures the expected residual uncertainty in $X$ after knowing $C$. Returning to the coin toss example, suppose a friend $C$ selects the fair or the biased coin with equal probability, and only he knows which coin is tossed. If he reveals that he used the fair coin, then we are more uncertain with $H(X \mid C = \text{fair}) = \log 2 \approx 0.693$ nats, while $H(X \mid C = \text{biased}) \approx 0.06$ in the biased case. The global conditional entropy $H(X \mid C) \approx 0.375$ averages over these two values, reflecting how much overall uncertainty about the outcome remains once we know which coin is tossed. The reduction in uncertainty about $X$ from observing $C$ is also known as the \textit{mutual information} $I(X; C) := H(X) - H(X \mid C)$. It quantifies the dependence of both variables and satisfies $I(X; C) \geq 0$ with equality iff $X$ and $C$ are independent.

These definitions extend naturally to random vectors: for $\bm{X}$ on $\mathcal{X} \subseteq \mathbb{R}^p$, the \textit{joint entropy} $H(\bm{X})$, the conditional entropy $H(\bm{X} \mid  \bm{C})$ and the mutual information $I(\bm{X}; \bm{C}) = H(\bm{X}) - H(\bm{X} \mid \bm{C})$ are defined analogously with the integral taken over $\mathcal{X}$ using the joint or conditional distribution, respectively. 
The dependence among the components of a random vector $\bm{X}$ is measured by the \textit{total correlation} $\operatorname{TC}(\bm{X}) := \sum_{i=1}^{p} H(X_i) - H(\bm{X})$. Like mutual information, $\operatorname{TC}(\bm{X}) \geq 0$ with equality iff the components of $\bm{X}$ are mutually independent, and it reduces to $I(X_1; X_2)$ for $p = 2$. A key identity linking joint and conditional entropies is the \textit{chain rule of entropy},
\begin{align}
    H(\bm{X}) = \sum_{i=1}^{p} H(X_i \mid X_1, \ldots, X_{i-1}) = \sum_{i=1}^{p} H(X_i \mid \bm{X}_{<i}),
    \label{eq:chain-rule}
\end{align}
which decomposes joint uncertainty into sequential increments. Each summand captures the residual uncertainty in $X_i$ given all preceding components. The identity extends to both local and global conditional entropies, e.g., $H(\bm{X} \mid \bm{C}) = \sum_{i=1}^{p} H(X_i \mid X_1, \ldots, X_{i-1}, \bm{C})$. For more details on information-theoretic background, we refer to App.~\ref{app:information-theory}.

\subsection{Shapley Values}\label{sec:background-shapley}

Shapley values \citep{Shapley1953} provide a fair attribution of a total \textit{"payoff"} among players in a cooperative game. For feature attribution in machine learning \citep{lundberg2017}, the players are typically the input features $X_1, \ldots, X_p$ and a value function $v \colon 2^{[p]} \times \mathcal{X} \to \mathbb{R}$ assigns each \textit{coalition} $S \subseteq [p]$ the corresponding real-valued payoff for a specific instance $\bm{x} \in \mathcal{X}$\footnote{We only consider local attributions in this paper. However, there also exist global Shapley attributions \citep{covert2020sage, Owen2014}.}. The Shapley value of feature $j$ is its average marginal contribution across all possible coalitions:
\begin{align}
    \phi_v(j, \bm{x}) = \sum_{S \subseteq [p] \setminus \{j\}} \frac{|S|!\,(p - |S| - 1)!}{p!} \Bigl[v(S \cup \{j\}, \bm{x}) - v(S, \bm{x})\Bigr].
    \label{eq:shapley}
\end{align}
For any value function $v$, this is the unique attribution satisfying efficiency, symmetry, linearity, and the null player axioms~\citep{lundberg2017}.

A key feature of the Shapley framework is that different value functions $v$ yield different attributions, each answering another question. For instance, standard SHAP \citep{lundberg2017} defines the value function for a predictive model $f: \mathcal{X} \to \mathcal{Y}$ as $v_0(S, \bm{x}) = \mathbb{E}[f(\bm{X}) \mid \bm{X} = (\bm{x}_S, \bm{X}_{\bar{S}})]$. The resulting Shapley values decompose the prediction into feature-wise contributions relative to the average prediction by the efficiency property, i.e., $f(\bm{x}) - \mathbb{E}[f(\bm{X})] = \sum_{j=1}^p \phi_{v_0}(j, \bm{x})$. Other choices include (local and global) risk-based value functions defined via expected loss \citep{Lundberg2020fromlocal,covert2020sage}, sensitivity-based approaches~\cite{Owen2014}, and information-theoretic formulations based on divergences such as the KL divergence \citep{jethani2023dontbefooled}. However, closely related to our approach is InfoSHAP \citep{watson2023explaining}, which moves from explaining predictions to explaining \emph{predictive uncertainty} by using the conditional entropy as the value function,
\begin{align}
    v_{H}(S, \bm{x}) = \mathbb{E}_{\bm{X}_{\bar{S}}} \bigl[H\bigl(Y \bigm|  \bm{X} = (\bm{x}_S, \bm{X}_{\bar{S}})\bigr)\bigr],
    \label{eq:infoshap}
\end{align}
so that the resulting Shapley values decompose how much each feature contributes to the predictive uncertainty about the outcome $Y$. InfoSHAP is formulated and evaluated for scalar outcomes $Y \in \mathbb{R}$ and has not been extended to multivariate targets $\bm{Y}$ including their dependence structure.

\section{Multivariate Entropy-Shapley Attribution}\label{sec:method}

When explaining the predictive uncertainty of a model with multivariate output $\bm{Y} = (Y_1, \dots, Y_T)$, a fundamental choice arises: should we explain the uncertainty of each component separately, or of the full vector jointly? The first ignores dependencies between components, whereas the second captures these dependencies but loses information about where in the output the uncertainty is located. Neither alone tells the full story, which motivates our \emph{hierarchy of three entropy-based Shapley games} that we formalize below, each defining a value function based on the conditional entropy but evaluated at a different resolution of the output vector. While the framework applies to any multivariate output with a natural order of components, we instantiate it on multi-step time series forecasting in Sec.~\ref{sec:experiments}. 

\subsection{Hierarchy of Entropy Games}\label{sec:method-hierarchy}

All three game levels share the same player set $[p]$ and extend the InfoSHAP value function Eq.~\eqref{eq:infoshap} to multivariate outputs, but differ in which aspect of the joint distribution $p(\bm{y} \mid \bm{x})$ they target.

\textbf{Level 1 (Marginal).} The first game applies InfoSHAP \citep{watson2023explaining} to each $Y_t$ separately to explain the marginal uncertainty at each output component $t$, while ignoring the dependencies within the outcome:
\begin{align}
    v_{H}^{(t)}(S, \bm{x}) := \mathbb{E}_{\bm{X}_{\bar{S}}} \bigl[H\bigl(Y_t \bigm|  \bm{X} = (\bm{x}_S, \bm{X}_{\bar{S}})\bigl)\bigr], \qquad t = 1, \ldots, T.
    \label{eq:level1}
\end{align}%
By decomposing the difference between the local and global conditional entropies $H(Y_t \mid \bm{x}) - H(Y_t \mid \bm{X})$, these Shapley values informally quantify \textit{how much each feature contributes to the predictive uncertainty of component $t$, compared to the population average.}

\textbf{Level 2 (Sequential).} The second game explains the incremental uncertainty at component $t$, given all preceding components:
\begin{align}
    v_{H}^{(t \mid <t)}(S, \bm{x}) :=  \mathbb{E}_{\bm{X}_{\bar{S}}} \bigl[H\bigl(Y_t \bigm| \bm{Y}_{<t}, \bm{X} = (\bm{x}_S, \bm{X}_{\bar{S}})\bigl)\bigr], \qquad t = 1, \ldots, T.
    \label{eq:level2}
\end{align}
Each game corresponds to one term in the chain rule decomposition in Eq.~\eqref{eq:chain-rule}, measuring the residual uncertainty in $Y_t$ that remains after conditioning on earlier outcome components and the available features. 
By decomposing the difference between the sequential local and global conditional entropies, $H(Y_t \mid \bm{Y}_{<t}, \bm{x}) - H(Y_t \mid \bm{Y}_{<t}, \bm{X})$, the associated Shapley values informally quantify \textit{how much each feature contributes to the additional predictive uncertainty at $t$, beyond what the preceding components already reveal, compared to the population average.}

\textbf{Level 3 (Joint).} The third game explains the joint uncertainty of the full output vector:
\begin{align}
    v_{H}^\text{joint}(S, \bm{x}) :=  \mathbb{E}_{\bm{X}_{\bar{S}}} \bigl[H\bigl(\bm{Y} \bigm|  \bm{X} = (\bm{x}_S, \bm{X}_{\bar{S}})\bigl)\bigr].
    \label{eq:level3}
\end{align}
This captures all marginal and cross-component effects simultaneously, but no longer reveals where in the output the uncertainty is located. 
By decomposing the difference between the full joint local and global conditional entropies, $H(\bm{Y} \mid \bm{x}) - H(\bm{Y} \mid \bm{X})$, the associated Shapley values informally quantify \textit{how much each feature contributes to the predictive uncertainty of the entire forecast, compared to the population average.}

Substituting the value functions into Eq.~\eqref{eq:shapley} yields the Level 1--3 Shapley values: $\phi^{(t)}(j, \bm{x})$, $\phi^{(t \mid <t)}(j, \bm{x})$, and $\phi^{\text{joint}}(j, \bm{x})$. 
As noted above, these decompose the respective local--global conditional entropy gaps (see also Corollary~\ref{cor:local-global-entropy-decomposition}). 
The specific global conditional entropy contrasts arise because the value functions in Eqs.~\eqref{eq:level1}--\eqref{eq:level3} rely on \textit{marginalizations} of the global conditional entropy, rather than a subset-conditional approach (which for Level~3 would correspond to $H(\bm{Y} \mid \bm{X}_S = \bm{x}_S)$ as value function, and decomposing $H(\bm{Y} \mid \bm{x}) - H(\bm{Y})$). 
As \citet{watson2023explaining} note, however, this demands intractable coalition densities $p(\bm{y} \mid \bm{x}_S)$, while our formulation only requires the full conditional $p(\bm{y} \mid \bm{x})$. 
Consequently, in our games, $\phi(j, \bm{x}) > 0$ indicates that feature $j$ drives higher predictive uncertainty than the \textit{population average}, while $\phi(j, \bm{x}) < 0$ implies a reduction.

The resulting Shapley values for the three levels are not independent constructions, as they are linked through the chain rule of entropy Eq.~\eqref{eq:chain-rule}. The proof can be found in App.~\ref{app:proofs}.
\begin{proposition}[Chain-rule linkage]\label{prop:chain-rule-linkage}
The Level~3 Shapley values decompose additively into the Level~2 Shapley values across components, i.e., for all $j \in [p]$ it holds that
\begin{align}
    \phi^{\text{joint}}(j, \bm{x}) = \sum_{t=1}^{T} \phi^{(t \mid <t)}(j, \bm{x}).
    \label{eq:chain-rule-linkage}
\end{align}
Moreover, if the output components $Y_1, \ldots, Y_T$ are conditionally independent given $\bm{X}$, then $\phi^{(t \mid <t)}(j, \bm{x}) = \phi^{(t)}(j, \bm{x})$ for all $t$, and Level~3 reduces to the sum of Level~1 values.
\end{proposition}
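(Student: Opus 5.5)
The plan is to show that the value functions themselves satisfy the chain-rule identity coalition by coalition. The Shapley map in Eq.~\eqref{eq:shapley} is linear in the value function, so that identity then transfers directly to the Shapley values.

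First, fix a coalition $S \subseteq [p]$ and a realization $\tilde{\bm{x}}_{\bar S}$ of $\bm{X}_{\bar S}$, and write $\bm{z} = (\bm{x}_S, \tilde{\bm{x}}_{\bar S})$. Applying the local form of the chain rule Eq.~\eqref{eq:chain-rule} to the conditional law $p(\bm{y} \mid \bm{X} = \bm{z})$ gives
\begin{align*}
H(\bm{Y} \mid \bm{X} = \bm{z}) = \sum_{t=1}^{T} H(Y_t \mid \bm{Y}_{<t}, \bm{X} = \bm{z}).
\end{align*}
Here $H(Y_t \mid \bm{Y}_{<t}, \bm{X} = \bm{z})$ is the conditional entropy of $Y_t$ given $\bm{Y}_{<t}$, averaged over $\bm{Y}_{<t} \sim p(\cdot \mid \bm{z})$. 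This is exactly the quantity inside Eq.~\eqref{eq:level2}.

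Next, take $\mathbb{E}_{\bm{X}_{\bar S}}$ of both sides. Linearity of expectation then yields
\begin{align*}
v_H^{\text{joint}}(S,\bm{x}) = \sum_{t=1}^{T} v_H^{(t\mid<t)}(S,\bm{x}) \quad \text{for every } S.
\end{align*}
Exchanging the finite sum with the expectation only requires each term to be integrable. Finiteness of the entropies is implicitly assumed wherever the value functions are defined, and I would state it as a standing assumption.

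Now substitute this identity into Eq.~\eqref{eq:shapley}. The marginal contributions $v(S\cup\{j\},\bm{x}) - v(S,\bm{x})$ split as a sum over $t$. The weights do not depend on $t$, so swapping the two finite sums gives Eq.~\eqref{eq:chain-rule-linkage}. This step is just the linearity axiom.

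For the second claim, suppose $Y_1,\dots,Y_T$ are conditionally independent given $\bm{X}$. Then for every $\bm{z}$, $p(y_t \mid \bm{y}_{<t}, \bm{z}) = p(y_t \mid \bm{z})$, which gives
\begin{align*}
H(Y_t \mid \bm{Y}_{<t}, \bm{X}=\bm{z}) = H(Y_t \mid \bm{X}=\bm{z}).
\end{align*}
Equivalently, the local conditional mutual information $I(Y_t;\bm{Y}_{<t}\mid \bm{X}=\bm{z})$ vanishes. Averaging over $\bm{X}_{\bar S}$ shows $v_H^{(t\mid<t)}(S,\bm{x}) = v_H^{(t)}(S,\bm{x})$ for all $S$. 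Since the Shapley value is a function of the game alone, $\phi^{(t\mid<t)} = \phi^{(t)}$, and combining this with the first part gives $\phi^{\text{joint}} = \sum_t \phi^{(t)}$.

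The main point requiring care is the hypothesis in the second claim. The argument needs conditional independence at every evaluation point $\bm{z} = (\bm{x}_S, \tilde{\bm{x}}_{\bar S})$, for all $S$ and almost every $\tilde{\bm{x}}_{\bar S}$, not merely at the explained instance $\bm{x}$. I would interpret the hypothesis as conditional independence holding almost surely in $\bm{X}$, which is the natural reading of the statement. With that reading, the remaining measure-theoretic details are routine.
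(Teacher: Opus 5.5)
Your proof is correct and follows the paper's argument: apply the chain rule to each local conditional entropy, use linearity of expectation to get $v_H^{\text{joint}} = \sum_t v_H^{(t\mid<t)}$ for every coalition, pass this to the Shapley values by linearity, and under conditional independence note that the Level~1 and Level~2 games coincide. One small correction to your closing remark: ``almost surely in $\bm{X}$'' is weaker than the requirement you state just before it, because the evaluation points $(\bm{x}_S,\tilde{\bm{x}}_{\bar S})$ all lie on the slice $\{\bm{x}_S\}\times\mathcal{X}_{\bar S}$. For nonempty $S$ and continuous features this slice is $P_{\bm{X}}$-null, and for $S=[p]$ it is the single point $\bm{x}$, so the hypothesis should be read as the paper does (``for any $\bm{X}$''), i.e., as conditional independence of the model's predictive distribution $p(\bm{y}\mid\bm{z})$ at every input $\bm{z}$.
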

This implies that Level~2 Shapley values provide a component-wise decomposition of the joint attribution. If a feature drives overall forecast uncertainty (Level~3), Level~2 reveals at which components this effect materializes and how strongly.


While Prop.~\ref{prop:chain-rule-linkage} connects Level~2 and Level~3, the relationship between Level~1 and Level~3 is controlled by the dependency structure of the outcomes. To formalize this, we introduce the total correlation game, defined by the following value function:
\begin{align}
   v_{\operatorname{TC}}(S, \bm{x}) 
:= \mathbb{E}_{\bm{X}_{\bar{S}}}[\operatorname{TC}(\bm{Y} \mid \bm{X}) \mid \bm{X} = (\bm{x}_S, \bm{X}_{\bar{S}})].
\end{align}
The associated Shapley values $\phi^{\operatorname{TC}}(j, \bm{x})$ constitute the cross-component attributions by decomposing the deviation of the local total correlation from its population average $\operatorname{TC}(\bm{Y} \mid \bm{x}) - \operatorname{TC}(\bm{Y} \mid \bm{X})$ (see App.~\ref{app:information-theory} for further details on $\operatorname{TC}$). 
The below Proposition highlights how the TC game connects the Level~1 and Level~3 games:
\begin{proposition}[Cross-component decomposition]\label{prop:cross-component}
The Shapley values of the TC game isolate the contribution of feature $j$ from the dependency structure between output components, i.e., for all $j \in [p]$ it holds that $\phi^{\operatorname{TC}}(j, \bm{x}) = \sum_{t = 1}^T \phi^{(t)}(j, \bm{x}) - \phi^{\text{joint}}(j, \bm{x})$.
%
\end{proposition}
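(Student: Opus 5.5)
The plan is to show the identity at the level of value functions and then pass it through the Shapley map using linearity. The one definitional point I must fix first is how to read $v_{\operatorname{TC}}$. I will read it in the same marginalized form as the other games: for fixed $S$ and a realization $\tilde{\bm{x}} = (\bm{x}_S, \tilde{\bm{x}}_{\bar S})$, take the local total correlation
\begin{align*}
\operatorname{TC}(\bm{Y} \mid \bm{X} = \tilde{\bm{x}}) = \sum_{t=1}^T H(Y_t \mid \bm{X} = \tilde{\bm{x}}) - H(\bm{Y} \mid \bm{X} = \tilde{\bm{x}}),
\end{align*}
which is the definition of $\operatorname{TC}$ applied to the conditional law $p(\bm{y} \mid \tilde{\bm{x}})$. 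Then $v_{\operatorname{TC}}(S,\bm{x}) = \mathbb{E}_{\bm{X}_{\bar S}}[\operatorname{TC}(\bm{Y} \mid \bm{X} = (\bm{x}_S, \bm{X}_{\bar S}))]$.

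With that reading, the first step is pointwise: for every coalition $S$ the pointwise identity above holds inside the expectation. The second step applies linearity of $\mathbb{E}_{\bm{X}_{\bar S}}$, which yields the value-function identity
\begin{align*}
v_{\operatorname{TC}}(S,\bm{x}) = \sum_{t=1}^T v_H^{(t)}(S,\bm{x}) - v_H^{\text{joint}}(S,\bm{x}) \quad \text{for all } S \subseteq [p].
\end{align*}
The linearity step needs each term to be finite. I will assume this implicitly, as the paper does for its entropies, so that no $\infty-\infty$ issue arises.

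The third step invokes the linearity axiom of Shapley values. Equation~\eqref{eq:shapley} is a fixed linear combination of the differences $v(S\cup\{j\},\bm{x}) - v(S,\bm{x})$, with weights independent of $v$. The map $v \mapsto \phi_v(j,\bm{x})$ is therefore linear, so applying it to both sides gives $\phi^{\operatorname{TC}}(j,\bm{x}) = \sum_t \phi^{(t)}(j,\bm{x}) - \phi^{\text{joint}}(j,\bm{x})$ for every $j$.

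The main obstacle is interpretive rather than technical: the notation $\operatorname{TC}(\bm{Y}\mid\bm{X})$ inside a conditional expectation is ambiguous. If it were read as the global conditional total correlation, a constant, then $v_{\operatorname{TC}}$ would not depend on $S$ and its Shapley values would vanish. I will therefore state explicitly that it denotes the local total correlation evaluated at the plugged-in $\tilde{\bm{x}}$, which matches the claimed decomposition of $\operatorname{TC}(\bm{Y}\mid\bm{x}) - \operatorname{TC}(\bm{Y}\mid\bm{X})$.

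As an optional remark, combining this result with Prop.~\ref{prop:chain-rule-linkage} gives $\phi^{\operatorname{TC}}(j,\bm{x}) = \sum_t \bigl(\phi^{(t)}(j,\bm{x}) - \phi^{(t\mid<t)}(j,\bm{x})\bigr)$. This expresses the cross-component term as a sum of per-component interaction-information contributions and serves as a consistency check.
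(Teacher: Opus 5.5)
Your proposal is correct and matches the paper's proof: you expand the local total correlation pointwise inside $\mathbb{E}_{\bm{X}_{\bar S}}$, use linearity of expectation to obtain $v_{\operatorname{TC}} = \sum_t v_H^{(t)} - v_H^{\text{joint}}$ for every $S$, and then apply linearity of the Shapley map. Your explicit reading of $\operatorname{TC}(\bm{Y}\mid\bm{X})$ as the local total correlation at the plugged-in input is exactly what the paper's second line uses implicitly. One small terminology point in your optional remark: $\phi^{(t)} - \phi^{(t\mid<t)}$ is the Shapley value of the local mutual information $I(Y_t;\bm{Y}_{<t}\mid \bm{X}=\tilde{\bm{x}})$, matching the chain rule $\operatorname{TC}(\bm{Y}\mid\bm{X}=\tilde{\bm{x}}) = \sum_t I(Y_t;\bm{Y}_{<t}\mid\bm{X}=\tilde{\bm{x}})$, so it is not an interaction information.
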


The total correlation measures the overall statistical redundancy between output components. Prop.~\ref{prop:cross-component} thus gives the cross-component attributions a precise interpretation: they decompose how much each feature shifts the inter-component dependence relative to a typical instance. Crucially, features that only affect the dependence structure without changing any marginal distribution receive $\phi^{(t)}(j, \bm{x}) = 0$ for all $t$, making them entirely invisible to Level~1 (see Sec.~\ref{sec:experiments-poc} for an example).

\subsection{Estimating the Value Functions}\label{sec:method-estimation}


Evaluating the value functions (Eqs.~\ref{eq:level1}--\ref{eq:level3}) in practice requires decomposing them into two operations: an outer expectation over the out-of-coalition features $\bm{X}_{\bar{S}}$, and an inner conditional entropy evaluated at a specific \textit{completed input} $\tilde{\bm{x}}$. The outer expectation represents the standard Shapley imputation step, typically approximated via Monte Carlo integration by drawing $K$ background samples $\bm{x}_{\bar{S}}^{(k)}$ to form $K$ completed inputs $\tilde{\bm{x}}^{(k)} = (\bm{x}_S, \bm{x}_{\bar{S}}^{(k)}), k = 1,\ldots, K$. Our hierarchy is agnostic to how these are drawn; any standard imputation method (e.g., baseline, marginal, or conditional \citep{aas2021explaining,frye2020shapley,chen2023algorithms}) can be used. Consequently, the core technical challenge specific to our method reduces to evaluating the inner entropies (e.g., $H(\bm{Y} \mid \bm{X} = \tilde{\bm{x}}^{(k)})$) for a fixed completed input $\tilde{\bm{x}}^{(k)}$. How we estimate these terms depends on how the model provides access to $p(\bm{y} \mid \tilde{\bm{x}})$. We consider three cases: closed-form joint densities, illustrated by multivariate Gaussian outputs (Sec.~\ref{sec:method-estimation-gaussian}); factorized one-step conditionals, as in autoregressive models (Sec.~\ref{sec:method-estimation-conditional}); and purely sample-based outputs (Sec.~\ref{sec:method-estimation-samples}).

\subsubsection{Multivariate Gaussian Outputs}\label{sec:method-estimation-gaussian}

A common class of probabilistic models directly learns distributional parameters, e.g., $\mu(\bm{x})$ and $\bm{\Sigma}(\bm{\tilde{x}})$ for a multivariate Gaussian outcome $\bm{Y} \mid \bm{\tilde{x}} \sim \mathcal{N}(\bm{\mu}(\bm{\tilde{x}}), \bm{\Sigma}(\bm{\tilde{x}}))$. In this case, the value function at each level of the hierarchy (Sec.~\ref{sec:method-hierarchy}) admits a closed-form expression that depends solely on the covariance matrix $\bm{\Sigma}(\bm{\tilde{x}})$. For more details on the entropy of multivariate Gaussians, we refer to \cite{Cover2005}.

For Level~1, the entropy of each component $Y_t$ depends only on the corresponding diagonal entry:
\begin{align}
    H(Y_t \mid \bm{\tilde{x}}) = \tfrac{1}{2} \log \left(2\pi e\, \Sigma_{tt}(\bm{\tilde{x}}) \right), \qquad t = 1, \ldots, T.\label{eq:gaussian-l1}
\end{align}
For Level~2, the conditional distribution $Y_t \mid \bm{Y}_{<t}, \bm{\tilde{x}}$ is again Gaussian, with variance given by the Schur complement of the leading $(t{-}1) \times (t{-}1)$ submatrix in the leading $t \times t$ submatrix of $\bm{\Sigma}(\bm{\tilde{x}})$:
\begin{align}
    H(Y_t \mid \bm{Y}_{<t}, \bm{\tilde{x}}) = \tfrac{1}{2}\log\bigl(2\pi e\, \sigma^2_{t \mid <t}(\bm{\tilde{x}})\bigr), \quad \text{where} \quad \sigma^2_{t \mid <t} = \Sigma_{tt} - \bm{\Sigma}_{t,<t}\, \bm{\Sigma}_{<t,<t}^{-1}\, \bm{\Sigma}_{<t,t}.
    \label{eq:gaussian-l2}
\end{align}
For Level~3, the entropy follows from the determinant of the full covariance matrix $\bm{\Sigma}(\bm{\tilde{x}})$:
\begin{align}
    H(\bm{Y} \mid \bm{\tilde{x}}) = \tfrac{1}{2}\log \left((2\pi e)^T\det\bm{\Sigma}(\bm{\tilde{x}}) \right).
    \label{eq:gaussian-l3}
\end{align}
These three formulas show that the value function is entirely independent of the mean prediction $\bm{\mu}(\bm{x})$, i.e., features that only affect the mean receive zero entropy attribution at every level. Each level operates on a different part of the covariance matrix (see Fig.~\ref{fig:cov-decomposition}) and thus reflects how each level treats the output dependencies (Sec.~\ref{sec:method-hierarchy}). The same closed-form computations also extend to other distributional families with tractable marginal, conditional, and joint entropies, e.g., to multivariate Student-$t$ or, more generally, elliptical distributions~\cite{arellano2013, Fang2018}.

\subsubsection{Factorized Conditional Densities}\label{sec:method-estimation-conditional}

A second class of probabilistic models, including autoregressive forecasting models such as DeepAR~\cite{salinas2020deepar}, factorizes the predictive distribution along the natural ordering of $\bm{Y} = (Y_1, \ldots, Y_T)$ as $p(\bm{y} \mid \bm{\tilde{x}}) = \prod_{t=1}^{T} p(y_t \mid \bm{y}_{<t}, \bm{\tilde{x}})$ by learning the one-step conditionals $p(y_t \mid \bm{y}_{<t}, \bm{\tilde{x}})$, typically as a parametric distribution (e.g., Gaussian, Student-$t$ or log-normal). The factorization yields Levels~2 and~3 directly, while Level~1 requires additional estimation. The Level~2 entropies $H(Y_t \mid \bm{Y}_{<t}, \bm{\tilde{x}})$ are obtained by averaging the closed-form entropy of the one-step conditional $p(y_t \mid \bm{y}_{<t}, \bm{\tilde{x}})$ over samples of $\bm{Y}_{<t}$ drawn along the factorization, and the Level~3 entropy follows by the chain rule (Prop.~\ref{prop:chain-rule-linkage}). The Level~1 marginal entropies $H(Y_t \mid \bm{\tilde{x}})$, by contrast, have no closed form, since the marginal $p(y_t \mid \bm{\tilde{x}})$ is a mixture of one-step conditionals. They can be estimated from samples through marginalization, e.g., by Monte-Carlo averaging the conditional density $p(y_t \mid \bm{y}_{<t}, \bm{\tilde{x}})$ over samples of $\bm{Y}_{<t}$, or by applying a one-dimensional nonparametric entropy estimator to the marginal samples of $Y_t$, which avoids high-dimensional density estimation.

\subsubsection{Sample-based Entropy Estimation}\label{sec:method-estimation-samples}


When $p(\bm{y}\mid \bm{x})$ is unavailable, we estimate the local entropies from simulated joint trajectories. For each completed input $\tilde{\bm{x}}$ arising in outer expectation step (Sec.~\ref{sec:method-estimation}), we draw trajectories $\bm{y}^{(1)},\ldots,\bm{y}^{(N)}\sim p(\bm{Y}\mid \tilde{\bm{x}})$ and evaluate the required entropy terms using one of several approaches.

\textbf{Nonparametric kNN estimation.}
A fully nonparametric alternative uses nearest-neighbour distances \citep{kozachenko1987sample}. While this efficiently yields Level~1 marginals, Level~2 sequential terms require computing differences 
$H(\bm{Y}_{\leq t}\mid \bm{X}=\tilde{\bm{x}})-H(\bm{Y}_{<t}\mid \bm{X}=\tilde{\bm{x}})$,
from which Level~3 follows. Although assumption-light, computing potentially high-dimensional joint entropies for larger horizons at Level~2 and 3  suffers from the curse of dimensionality, demanding a large sample size $N$ to maintain accuracy.

\textbf{Parametric Gaussian estimation.} 
Conversely, assuming a multivariate Gaussian predictive distribution allows efficient entropy evaluation by applying the closed-form expressions (Sec.~\ref{sec:method-estimation-gaussian}) directly to the sample covariance matrix $\hat{\bm{\Sigma}}(\tilde{\bm{x}})$ estimated from the trajectories. However, while computationally cheap, this estimator remains inherently biased for predictive distributions exhibiting non-Gaussian characteristics such as skewness or heavy tails.

\textbf{Semiparametric Gaussian copula.}
To handle non-Gaussian marginals while retaining computational efficiency, we employ a Gaussian copula. For each completed input $\tilde{\bm{x}}$, we estimate one-dimensional marginal densities $\hat{f}_t$ and CDFs $\hat{F}_t$ by KDE, and use the fitted marginals to compute the Level~1 entropies $\widehat H(Y_t\mid \tilde{\bm{x}})$. The samples are then transformed to latent Gaussian variables via $z_t^{(i)}=\Phi^{-1}(\hat F_t(y_t^{(i)}\mid\tilde{\bm{x}}))$, from which we estimate the empirical correlation matrix $\hat{\bm{R}}(\tilde{\bm{x}})$. Level~2 sequential entropies are computed as
$\widehat H(Y_t\mid \bm{Y}_{<t},\tilde{\bm{x}}) = \widehat H(Y_t\mid \tilde{\bm{x}}) + \frac{1}{2}\log(\hat{\sigma}^2_{t\mid <t,z})$,
where $\hat{\sigma}^2_{t\mid <t,z}$ is the Schur complement of $\hat{\bm{R}}$, and Level~3 follows by Prop.~\ref{prop:chain-rule-linkage}. The estimator thus allows flexible marginals while retaining stable and tractable Gaussian-copula dependence; full formulae are given in App.~\ref{app:copula}.

\section{Experiments}\label{sec:experiments}

We evaluate the proposed hierarchy in three steps, moving from controlled closed-form settings to sample-based estimation and model comparison. We first use a synthetic Gaussian DGP (Sec.~\ref{sec:experiments-poc}) and a real distributional-regression model (Sec.~\ref{sec:experiments-bikesharing}), where the entropy terms are available analytically.  Finally, we apply our hierarchy to the UCI Electricity dataset \cite{trindade2015electricity}, comparing sample-based cross-component share on DeepAR \cite{salinas2020deepar} and Chronos \cite{ansari2024chronos}. The code for reproduction is available on \href{https://github.com/bips-hb/paper-shapley-hierarchy}{GitHub}\footnote{\url{https://github.com/bips-hb/paper-shapley-hierarchy}}.

\subsection{Proof of Concept on a Synthetic Gaussian Data-Generating Process}\label{sec:experiments-poc}

We demonstrate the hierarchy with a controlled data-generating process (DGP) where each feature plays a known role in the predictive distribution. The forecast is multivariate Gaussian, i.e., $\bm{Y} \mid \bm{x} \sim \mathcal{N}(\bm{\mu}(\bm{x}), \bm{\Sigma}(\bm{x}))$, with $T=4$ output components and $p=4$ input features. Feature $x_1$ controls the mean forecast, $x_2$ the marginal variance with a time-increasing effect, $x_3$ both variance and correlation, and $x_4$ exclusively controls correlation without affecting any marginals. The DGP itself serves as the predictive model to be explained. Full details can be found in App.~\ref{app:experiment_1}.

For a single instance $\bm{x}$ with a strong positive correlation ($\rho \approx 0.92$, Fig.~\ref{fig:synthetic-hierarchy}), our hierarchy recovers the ground-truth feature roles: features $x_2$ and $x_3$ dominate Level~1, with the time-increasing effect of $x_2$ visible across $t = 1,\ldots, 4$, while $x_1$ receives zero attribution at every level, confirming that our hierarchy isolates predictive uncertainty from the mean prediction. Most importantly, $x_4$ shows the blind spot of component-wise attributions: it leaves all marginals invariant and therefore receives zero attribution at Level~1, yet is the dominant negative contribution at Level~2 and in the cross-component term. Across panels, the polygons connect equal quantities and provide a visual illustration of the propositions: the chain-rule linkage (Prop.~\ref{prop:chain-rule-linkage}) and the cross-component decomposition (Prop.~\ref{prop:cross-component}).

\begin{figure}[t]
    \centering
    \includegraphics[width=\linewidth]{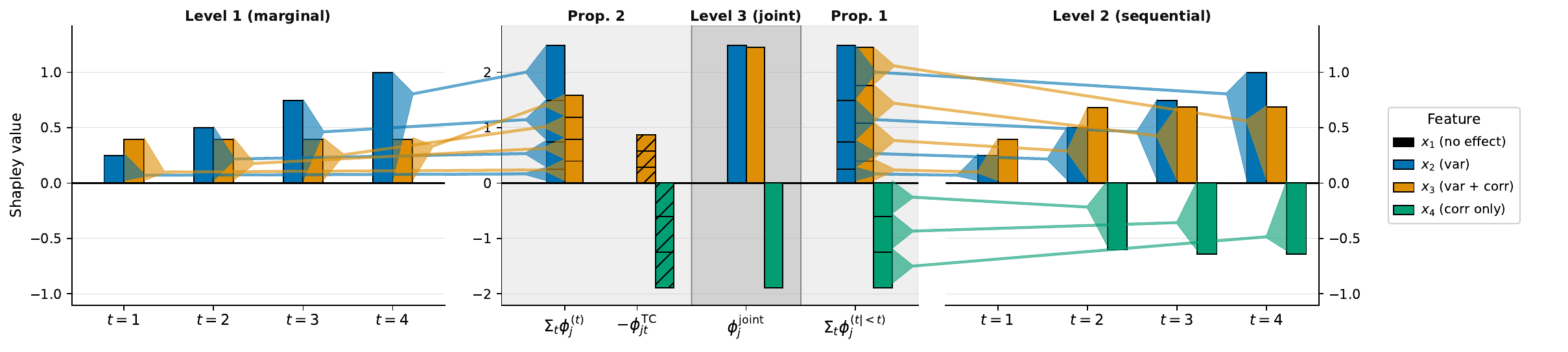}
    \caption{Entropy-Shapley hierarchy on the synthetic Gaussian DGP for an instance with high correlation. Bars connect equal quantities across panels, visualizing Prop.~\ref{prop:chain-rule-linkage} and Prop.~\ref{prop:cross-component}.}
    \label{fig:synthetic-hierarchy}
\end{figure}

\subsection{Distributional Regression on Bike Sharing}\label{sec:experiments-bikesharing}

We now move from controlled ground truth to a real distributional regression problem. We predict bike rental counts on the UCI Bike Sharing dataset~\cite{bike_sharing}, aggregated into $T=8$ two-hour blocks between 06:00 and 22:00, from $p=8$ daily features (three weather snapshots at 06:00, seven-day mean temperature, previous-day rentals, and the calendar features), using NGBoost~\cite{duan2020ngboost} with the multivariate normal as the target distributional family. This yields a feature-dependent mean and full covariance, and our hierarchy applies in closed form (Sec.~\ref{sec:method-estimation-gaussian}). Further details, comparison to standard SHAP applied to the predictive mean, and a global analysis can be found in App.~\ref{app:experiment_2}.

\begin{figure}[b]
    \centering
    \includegraphics[width=1\linewidth]{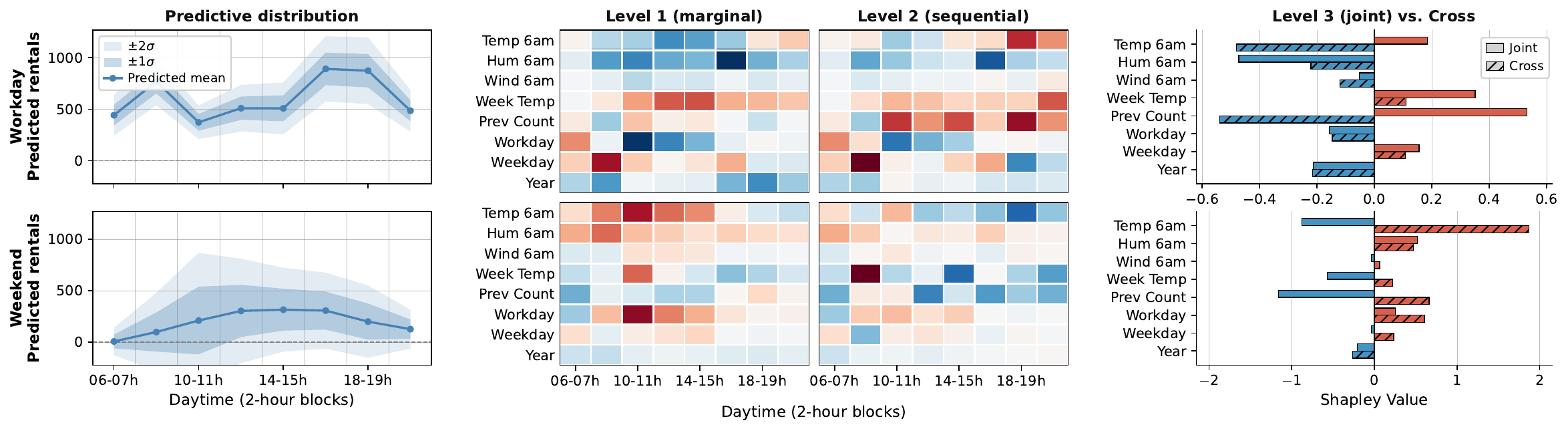}
    \caption{Entropy-Shapley hierarchy on NGBoost for two Bike Sharing instances (top: workday; bottom: weekend). Left: predicted mean and uncertainty bands. Center: Level~1 and Level~2 attributions (blue: $< 0$, red: $> 0$). Right: Level~3 and cross-component (hatched) attributions.}
    \label{fig:bikesharing-hierarchy}
\end{figure}

Figure~\ref{fig:bikesharing-hierarchy} illustrates the attributions of our hierarchy for two distinct instances\footnote{The feature values of the two instances are provided in Fig.~\ref{fig:bikesharing-instances}.}: a workday with a typical bimodal commute profile (top row, low uncertainty), and a weekend (bottom row, high uncertainty). Each level reveals a different aspect of how features drive the predictive uncertainty (left panel). At Level~1, the low morning humidity reduces the predicted uncertainty across nearly all daytime blocks for the workday instance.
Level~2 then shows the sequential view, where the same feature contributes less, because part of its information is already contained in earlier time blocks. In the right panel, Level~3 illustrates how strongly each feature drives the uncertainty of the entire forecast. On the weekend, \texttt{Temp 6am} and \texttt{Prev Count} are the most contributing features, both reducing the overall forecast uncertainty. The cross-component term then makes explicit whether a feature increases or decreases the dependence between rental blocks. For \texttt{Temp 6am} on the weekend instance, the joint contribution is negative, but the cross contribution is highly positive, showing that the morning temperature reduces the overall uncertainty while at the same time strongly increasing the dependence between the rental blocks.

\subsection{Cross-component Diagnostic across Forecaster Classes}\label{sec:experiments-deepar}

\begin{wrapfigure}{r}{0.38\textwidth}
  \begin{center}
    \includegraphics[width=0.38\textwidth]{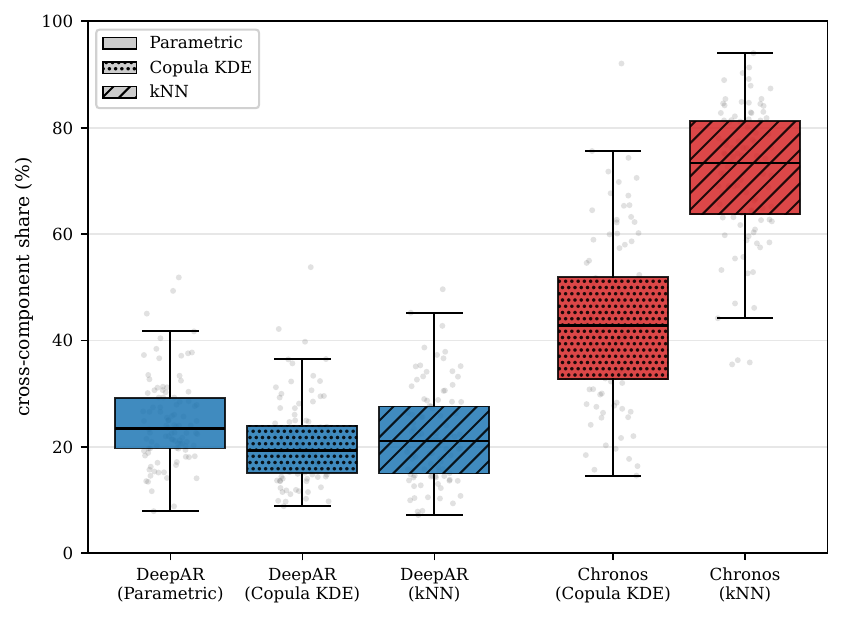}
  \end{center}
  \caption{Cross-component share on $100$ forecast origins for DeepAR and Chronos on the electricity dataset, estimated via the parametric (DeepAR only), Gaussian copula, and kNN.}
  \label{fig:model-comparison}
\end{wrapfigure}
While Sec.~\ref{sec:experiments-bikesharing} demonstrates the hierarchy on a model with closed-form predictive distribution, we now turn to forecasters whose distribution is only available through joint samples (Sec.~\ref{sec:method-estimation-samples}). On UCI Electricity~\cite{trindade2015electricity}, we compare two forecasters of different classes: DeepAR \cite{salinas2020deepar} with a Gaussian likelihood (autoregressive parametric) and Chronos \cite{ansari2024chronos} (zero-shot foundation model). For each model and $100$ forecast origins, we compute the cross-component share as
\begin{align}\label{eq:cross-share}
  \frac{\sum_j \bigl|\phi^{\operatorname{TC}}_j(\bm{x})\bigr|}
       {\sum_j \bigl|\phi^{\text{joint}}_j(\bm{x})\bigr|
        + \sum_j \bigl|\phi^{\operatorname{TC}}_j(\bm{x})\bigr|}.
\end{align}
This quantity measures the relative magnitude of the cross-component attribution mass within the combined attribution mass of the joint and cross-component terms, i.e., higher values indicate a larger relative contribution of cross-component dependence, whereas lower values indicate a larger relative contribution of the joint term.
We use the Gaussian copula and kNN estimators (Sec.~\ref{sec:method-estimation-samples}) as sample-based approaches for both models and, additionally, the approaches from Sec.~\ref{sec:method-estimation-conditional} for DeepAR using marginalization for Level~1. For DeepAR, we also conducted a more general validation of all sample-based estimators across likelihoods in App.~\ref{app:experiment_3}.

Figure~\ref{fig:model-comparison} shows a clear difference between the model classes as Chronos exhibits much more cross-component share than DeepAR, even though the forecast performance is comparably good (MAE $0.127$ for DeepAR vs.~$0.13$ for Chronos). Within each class, the two sample-based estimators on DeepAR closely match the parametric reference, likely because DeepAR's per-step Gaussian structure produces a near-Gaussian joint distribution that both estimators handle well. On Chronos, however, the copula and kNN estimations diverge substantially: Chronos models more complex and non-parametric distributions~\cite{ansari2024chronos}, so the Gaussian copula may be misspecified for the foundation model, while kNN can itself be biased on such high-complexity outputs (see Sec.~\ref{sec:method-estimation-samples}). Regardless of how this gap decomposes, the cross-component share clearly distinguishes the two model classes, providing a practical diagnostic of how much joint structure a forecaster has actually learned.


\section{Discussion and Limitations}\label{sec:discussion}

We have introduced a hierarchy of entropy-based Shapley games for explaining multivariate predictive distributions while accounting for their dependence structure, resolving a structural blind spot of standard component-wise attribution. The three levels of output resolution are linked through the chain-rule decomposition (Prop.~\ref{prop:chain-rule-linkage}) and the total-correlation characterization (Prop.~\ref{prop:cross-component}), allowing different aspects of predictive uncertainty to be attributed in a coherent way. While we have studied both the estimation of the levels (Sec.~\ref{sec:method-estimation}) and their applicability (Sec.~\ref{sec:experiments}), some open issues remain: 
\textit{(1)} The hierarchy attributes the predictive uncertainty regardless of whether it is aleatoric or epistemic in origin~\cite{Hllermeier2021}, i.e., the two can be attributed separately if the predictive distribution itself separates them, e.g., through MC dropout~\cite{Kendall2017} as in InfoSHAP~\cite{watson2023explaining}. 
\textit{(2)} Our diagnostic is only informative on models that can, in principle, represent inter-component dependence, as under conditional independence, the hierarchy collapses to a component-wise view. Moreover, the uncertainty must be driven by features rather than post-hoc trajectory sampling schemes~\cite{baron2025efficiently}, where any apparent coupling between components stems from the sampling procedure rather than the model itself. 
\textit{(3)} Sample-based estimation inherits known biases of copula and nearest-neighbour estimators on highly non-Gaussian distributions~\cite{berrett2019efficient}; as seen for Chronos (Sec.~\ref{sec:experiments-deepar}), this can produce estimator disagreement that is itself diagnostically useful. 
Future work could explore more flexible dependence structures, such as computationally feasible vine copulae, to capture stronger tail dependencies \citep{bedford2002vines, aas2009pair}.
\textit{(4)} Finally, our framework inherits two standard Shapley challenges: theoretically, the choice of background imputation method (Sec.~\ref{sec:method-estimation}) alters the interpretation of the resulting attributions~\cite{watson2023explaining}; computationally, exact evaluation scales exponentially with the number of features and the forecast horizon $T$, requiring established sampling-based approximations in higher dimensions~\cite{chen2023algorithms}.




\section*{Acknowledgements}

This work was conducted as part of the ``Uncertainty-aware feature attribution for temporal predictive models'' action, which has received funding from the European Union, via the oc3-2025-TES-01 issued and implemented by the ENFIELD project, under the grant agreement No. 101120657. Additionally, Niklas Koenen has been funded by the German Research Foundation (DFG) as part of the Research Unit ``Lifespan AI: From Longitudinal Data to Lifespan Inference in Health'' (DFG FOR 5347), Grant 459360854, and by the Emmy Noether Grant 437611051.
Finally, part of Martin's work has also been funded by the Integreat Center of Excellence -- a centre of excellence funded by Research Council of Norway, project number 332645.

\clearpage
{
\small
\bibliographystyle{elsarticle-num-names}
\bibliography{refs}
}

\clearpage
\appendix

\renewcommand{\thefigure}{\thesection.\arabic{figure}}
\renewcommand{\thetable}{\thesection.\arabic{table}}
\renewcommand{\theequation}{\thesection.\arabic{equation}}

\setcounter{figure}{0}
\setcounter{table}{0}
\setcounter{equation}{0}

\section{Additional Background}
\numberwithin{equation}{section}

\subsection{Information Theory}\label{app:information-theory}
In this appendix, we collect and explain in more detail the information-theoretic concepts used throughout the paper for our hierarchy of Shapley games, in particular the joint and conditional entropy, the (conditional) total correlation, and their behavior under chain-rule decompositions. For further details on information theory and total correlation, we refer to \citet{Cover2005} and \citet{Watanabe_TC}.

\paragraph{Joint and conditional entropy.}
The (differential) entropy $H(X)$ defined in Eq.~\eqref{eq:entropy} extends to a random vector $\bm{X}$ on a domain $\mathcal{X} \subseteq \mathbb{R}^p$ as the joint entropy
\begin{align}
    H(\bm{X}) = \mathbb{E}_{\bm{X}}\bigl[-\log\, p(\bm{X}) \bigr] = - \int_{\mathcal{X}} p(\bm{x})\, \log p(\bm{x})\, d\bm{x},
\end{align}
which depends on the full joint distribution $p(\bm{x})$. For our hierarchy, the central quantity is the conditional entropy of the multivariate output $\bm{Y}$ (on $\mathcal{Y} \subseteq \mathbb{R}^T$) given the inputs $\bm{X}$ (on $\mathcal{X} \subseteq \mathbb{R}^p$). For a specific realization $\bm{X} = \bm{x}$, the local conditional entropy
\begin{align}
    H(\bm{Y} \mid \bm{X} = \bm{x}) = - \int_{\mathcal{Y}} p(\bm{y} \mid \bm{x})\, \log \big(p(\bm{y} \mid \bm{x})\big)\, d\bm{y}
\end{align}
measures the residual uncertainty in $\bm{Y}$ for that particular instance $\bm{x}$. Averaging over the input distribution yields the global conditional entropy
\begin{align}
    H(\bm{Y} \mid \bm{X}) = \mathbb{E}_{\bm{\tilde{X}}}\bigl[H(\bm{Y} \mid \bm{X} = \bm{\tilde{X}})\bigr] &= - \int_{\mathcal{X}} p(\bm{x}) \int_{\mathcal{Y}} p(\bm{y} \mid \bm{x})\, \log \big(p(\bm{y} \mid \bm{x})\big)\, d\bm{y} d\bm{x} \notag \\
    &= -  \int_{\mathcal{X} \times \mathcal{Y}} p(\bm{y} , \bm{x})\, \log \big(p(\bm{y} \mid \bm{x})\big)\, d\bm{y} d\bm{x},
    \label{eq:cond-entropy-app}
\end{align}

which captures the expected residual uncertainty across the population. This local-vs-global distinction is used throughout Section~\ref{sec:method}: our value functions are precisely partial averages of this kind, where the expectation is taken only over the out-of-coalition features $\bm{X}_{\bar{S}}$ while $\bm{X}_S$ is held fixed at the instance value.

\paragraph{Total correlation and its conditional counterpart.}
The total correlation~\citep{Watanabe_TC} measures the overall statistical redundancy among the components of a random vector $\bm{X}$ and is defined as
\begin{align}
    \operatorname{TC}(\bm{X}) = \sum_{i=1}^p H(X_i) - H(\bm{X}),
\end{align}
i.e., the difference between the summed marginal entropies and the joint entropy. It satisfies $\operatorname{TC}(\bm{X}) \geq 0$, with equality iff the components are mutually independent. For the supervised setting, the relevant quantity is the \emph{conditional total correlation}, which measures the dependence among the multivariate outputs $\bm{Y}$ that remains \emph{after} conditioning on the covariates $\bm{X}$:
\begin{align}
    \operatorname{TC}(\bm{Y} \mid \bm{X}) = \sum_{t=1}^T H(Y_t \mid \bm{X}) - H(\bm{Y} \mid \bm{X}) = \mathbb{E}_{\bm{\tilde{X}}}\bigl[\operatorname{TC}(\bm{Y} \mid \bm{X} = \bm{\tilde{X}})\bigr],
\end{align}
following the same local-vs-global convention as the conditional entropy in Eq.~\eqref{eq:cond-entropy-app}, i.e., $\operatorname{TC}(\bm{Y}\mid \bm{X} = \bm{x}) = \sum_{t = 1}^T H(Y_t\mid \bm{X} = \bm{x}) - H(\bm{Y} \mid \bm{X} = \bm{x})$. The local quantity $\operatorname{TC}(\bm{Y} \mid \bm{X} = \bm{x})$ is the redundancy among output components for a specific instance, and Prop.~\ref{prop:cross-component} characterizes our cross-component attributions as the Shapley decomposition of the local-vs-global gap $\operatorname{TC}(\bm{Y} \mid \bm{x}) - \operatorname{TC}(\bm{Y} \mid \bm{X})$.

\paragraph{Chain rule of entropy.}
Joint entropies decompose sequentially via the chain rule of entropy,
\begin{align}
    H(\bm{X}) = \sum_{i=1}^p H(X_i \mid X_1, \dots, X_{i-1}),
\end{align}
which holds for any permutation of the indices. In our predictive setting, this extends using the conditional versions of above to $H(\bm{Y} \mid \bm{X}) = \sum_{t=1}^T H(Y_t \mid \bm{Y}_{<t}, \bm{X})$. While the joint entropy is invariant to the chosen ordering, the individual summands are not: different orderings produce different sequential decompositions, all summing to the same joint entropy. This ordering dependence is what makes our Level~2 game well-defined for ordered multivariate outputs such as time-series forecasts, where a natural ordering is given by the problem structure.

\section{Proofs of Section~\ref{sec:method}}\label{app:proofs}
\numberwithin{equation}{section}

\subsection{Proof of Prop.~\ref{prop:chain-rule-linkage} (Chain-rule linkage)}

\begin{proof} \label{proof:chain-rule-linkage}
The value function of the joint entropy game (Eq.~\eqref{eq:level3}) decomposes into the sum over time of the sequential entropy game value functions (Eq.~\eqref{eq:level2}):

 \begin{equation} \label{eq:chain-rule-linkage-proof}
\begin{split}
 v_{H}^\text{joint}(S, \bm{x}) &=\mathbb{E}_{\bm{X}_{\bar{S}}} \biggl[H\bigl(\bm{Y} \bigm|  \bm{X} = (\bm{x}_S, \bm{X}_{\bar{S}})\bigr)\biggr]\\
 &=\mathbb{E}_{\bm{X}_{\bar{S}}} \biggl[\sum_{t=1}^{T} H\bigl(Y_t \bigm| \bm{Y}_{<t}, \bm{X} = (\bm{x}_S, \bm{X}_{\bar{S}})\bigr)\biggr] \\
 &=\sum_{t=1}^{T}  \mathbb{E}_{\bm{X}_{\bar{S}}} \biggl[H\bigl(Y_t \bigm| \bm{Y}_{<t}, \bm{X} = (\bm{x}_S, \bm{X}_{\bar{S}})\bigr)\biggr]
 \\
 &=\sum_{t=1}^{T} v_{H}^{(t \mid <t)}(S, \bm{x}) 
\end{split}
\end{equation}
where the second line applies the chain rule of (conditional) entropy, while the third line uses the linearity of expectation.

The decomposition of the corresponding Shapley values $\phi^{\text{joint}}(j, \bm{x}) = \sum_{t=1}^{T} \phi^{(t \mid <t)}(j, \bm{x})$
follows directly from Eq.~\eqref{eq:chain-rule-linkage-proof} and the linearity axiom of Shapley values.

Finally, if the output components $Y_1, \ldots, Y_T$ are conditionally independent given $\bm{X}$, then $H(Y_t \mid \bm{Y}_{<t}, \bm{X}) = H(Y_t \mid \bm{X})$ for any $\bm{X}$. Consequently, the Level~2 and Level~1 value functions coincide for all coalitions, yielding identical Shapley values $\phi^{(t \mid <t)}(j, \bm{x}) = \phi^{(t)}(j, \bm{x})$ and reducing the joint attribution to $\phi^{\text{joint}}(j, \bm{x}) = \sum_{t=1}^T \phi^{(t)}(j, \bm{x})$.
\end{proof}

\subsection{Proof of Prop.~\ref{prop:cross-component} (Cross-component decomposition)}

\begin{proof} \label{proof:cross-component-decomposition}
The value function of the total correlation game mirrors the decomposition of the total correlation into marginal and joint conditional entropies:

\begin{equation} \label{eq:tc-value-func}
\begin{split}
v_{\operatorname{TC}}(S, \bm{x}) 
&= \mathbb{E}_{\bm{X}_{\bar{S}}}[\operatorname{TC}(\bm{Y} \mid \bm{X}) \mid \bm{X} = (\bm{x}_S, \bm{X}_{\bar{S}})] \\
&= \mathbb{E}_{\bm{X}_{\bar{S}}}[\sum_{t=1}^{T} H(Y_t  \mid \bm{X} = (\bm{x}_S, \bm{X}_{\bar{S}})) - H(\bm{Y}  \mid \bm{X} = (\bm{x}_S, \bm{X}_{\bar{S}}))] \\
&= \sum_{t=1}^{T}\mathbb{E}_{\bm{X}_{\bar{S}}}[H(Y_t  \mid \bm{X} = (\bm{x}_S, \bm{X}_{\bar{S}}))] - \mathbb{E}_{\bm{X}_{\bar{S}}}[H(\bm{Y}  \mid \bm{X} = (\bm{x}_S, \bm{X}_{\bar{S}}))] \\
&= \sum_{t=1}^{T} v_{H}^{(t)}(S, \bm{x}) - v_{H}^\text{joint}(S, \bm{x})
\end{split}
\end{equation}
where in the second line we used the definition of total correlation and in the third line the linearity of the expected value operator.
From Eq.~\eqref{eq:tc-value-func} and the linearity of the Shapley values in the value function follows that $\phi^{\operatorname{TC}}(j, \bm{x}) = \sum_{t} \phi^{(t)}(j, \bm{x}) - \phi^{\text{joint}}(j, \bm{x})$, where $\phi^{\operatorname{TC}}(j, \bm{x})$ are the Shapley values of the total correlation game $v_{\operatorname{TC}}(S, \bm{x})$.


\end{proof}

\subsection{Local--Global conditional entropy decomposition of Shapley values}\label{app:local-global-entropy-decomposition}

\begin{corollary}\label{cor:local-global-entropy-decomposition}
For the Level~1--3 games (Eqs.~\eqref{eq:level1}--\eqref{eq:level3}), the sum of the Shapley values equals the difference between the local and global conditional entropies:
\begin{equation} \label{eq:local-global-entropy-decomposition}
\begin{split}
\sum_{j=1}^p \phi^{(t)}(j, \bm{x}) &= H(Y_t \mid \bm{x}) - H(Y_t \mid \bm{X}) \qquad \text{(Level~1)}\\
\sum_{j=1}^p \phi^{(t \mid <t)}(j, \bm{x}) &= H(Y_t \mid \bm{Y}_{<t}, \bm{x}) - H(Y_t \mid \bm{Y}_{<t}, \bm{X})  \qquad \text{(Level~2)}\\
\sum_{j=1}^p \phi^{\text{joint}}(j, \bm{x}) &=H(\bm{Y} \mid \bm{x}) - H(\bm{Y} \mid \bm{X}) \qquad \text{(Level~3)}\\
\end{split}
\end{equation}
Analogously, for the TC game (Eq. ~\eqref{eq:tc-value-func}), the sum of the Shapley values matches the local--global total correlation gap: 
\begin{equation} \label{eq:local-global-entropy-decomposition-tc}
\begin{split}
\sum_{j=1}^p \phi^{\operatorname{TC}}(j, \bm{x}) =&\operatorname{TC}(\bm{Y} \mid \bm{x}) - \operatorname{TC}(\bm{Y} \mid \bm{X}).
\end{split}
\end{equation}
\end{corollary}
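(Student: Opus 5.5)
The plan is to use the efficiency axiom of the Shapley value. For any value function $v$ on player set $[p]$, the Shapley values in Eq.~\eqref{eq:shapley} satisfy $\sum_{j=1}^p \phi_v(j,\bm{x}) = v([p],\bm{x}) - v(\emptyset,\bm{x})$. This follows from the standard telescoping argument: rewrite Eq.~\eqref{eq:shapley} as an average over permutations of marginal contributions, and note that each permutation's contributions telescope. So the whole proof reduces to evaluating each value function at the grand coalition and at the empty coalition.

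\textbf{Grand coalition.} For $S=[p]$ we have $\bar{S}=\emptyset$. The outer expectation $\mathbb{E}_{\bm{X}_{\bar{S}}}$ is therefore trivial and the completed input is $\bm{x}$ itself. This gives:
\begin{itemize}
\item Level~1: $v_H^{(t)}([p],\bm{x}) = H(Y_t\mid \bm{X}=\bm{x})$.
\item Level~2: $v_H^{(t\mid<t)}([p],\bm{x}) = H(Y_t\mid \bm{Y}_{<t},\bm{X}=\bm{x})$.
\item Level~3: $v_H^{\text{joint}}([p],\bm{x}) = H(\bm{Y}\mid\bm{X}=\bm{x})$.
\end{itemize}

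\textbf{Empty coalition.} For $S=\emptyset$ the expectation runs over the full input $\bm{X}$. This gives $\mathbb{E}_{\tilde{\bm{X}}}[H(\cdot\mid\bm{X}=\tilde{\bm{X}})]$, which is by definition (Eq.~\eqref{eq:cond-entropy-app}) the global conditional entropy $H(\cdot\mid\bm{X})$. Subtracting the two values yields the three identities in Eq.~\eqref{eq:local-global-entropy-decomposition}.

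\textbf{Main obstacle.} The one point needing care is Level~2. There the local quantity $H(Y_t\mid\bm{Y}_{<t},\bm{X}=\bm{x})$ is itself an average over $\bm{Y}_{<t}$ drawn from $p(\cdot\mid\bm{x})$. I must check that averaging it over $\tilde{\bm{X}}\sim p(\bm{x})$ gives exactly the global quantity $H(Y_t\mid\bm{Y}_{<t},\bm{X})$. This holds because the joint density factorizes as $p(\bm{x})\,p(\bm{y}_{<t}\mid\bm{x})\,p(y_t\mid\bm{y}_{<t},\bm{x})$; integrating in the order $\bm{x}$ last (Fubini/tower property) reproduces the global conditional entropy. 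The check is valid under the paper's standing assumption that all densities exist, together with integrability of the entropies. The empty coalition also needs the marginal imputation reading of $\mathbb{E}_{\bm{X}_{\bar{S}}}$, i.e.\ that at $S=\emptyset$ the expectation is over $p(\bm{x})$; I adopt this as the definition.

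\textbf{TC game.} Two routes are available, and they agree.
\begin{itemize}
\item \emph{Via efficiency.} Apply efficiency to $v_{\operatorname{TC}}$ directly. At $[p]$ it equals $\operatorname{TC}(\bm{Y}\mid\bm{X}=\bm{x})$, and at $\emptyset$ it equals $\mathbb{E}_{\tilde{\bm{X}}}[\operatorname{TC}(\bm{Y}\mid\bm{X}=\tilde{\bm{X}})]=\operatorname{TC}(\bm{Y}\mid\bm{X})$, using the identity stated in App.~\ref{app:information-theory}.
\item \emph{Via Prop.~\ref{prop:cross-component}.} Sum $\phi^{\operatorname{TC}}=\sum_t\phi^{(t)}-\phi^{\text{joint}}$ over $j$ and insert the Level~1 and Level~3 identities. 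This gives $\sum_t H(Y_t\mid\bm{x})-H(\bm{Y}\mid\bm{x})-\bigl[\sum_t H(Y_t\mid\bm{X})-H(\bm{Y}\mid\bm{X})\bigr]$, which is the stated TC gap by the definitions of local and global conditional total correlation.
\end{itemize}
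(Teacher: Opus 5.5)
Your proposal is correct and matches the paper's proof: efficiency reduces each sum to $v([p],\bm{x})-v(\emptyset,\bm{x})$, where the grand coalition gives the local and the empty coalition the global conditional entropy (resp.\ total correlation). Your tower-property check for Level~2 and your remark that the empty-coalition expectation must run over $p(\bm{x})$ (so the identity would fail under, e.g., single-baseline imputation) make explicit what the paper's one-line argument leaves implicit.
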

\begin{proof}
For each game: By the efficiency property of Shapley values, the sum of Shapley values $\sum_{j=1}^p \phi(j, \bm{x})$ equals the difference between the value function of the grand coalition ($S=[p]$) and the value function of the empty coalition ($S=\emptyset$). For our games, the grand coalition evaluates the local conditional entropies at the specific instance $\bm{x}$, while the empty coalition evaluates the global conditional entropies marginalized over the entire input space.
\end{proof}



\section{Trajectory-Based Gaussian Copula Estimation}\label{app:copula}
\numberwithin{equation}{section}

In Section \ref{sec:method-estimation-samples}, we introduced a semiparametric Gaussian copula approach to efficiently estimate the local entropy terms required for our hierarchy of Shapley games. This method leverages simulated trajectories to capture complex, non-Gaussian marginal distributions while utilizing the computational tractability of Gaussian dependencies. 

Below, we detail the complete, step-by-step procedure for estimating the value functions of a specific feature coalition $S$ for a given instance $\bm{x}$. The procedure evaluates the inner conditional entropies for a single completed input $\tilde{\bm{x}} = \tilde{\bm{x}}^{(k)}$ (generated via the chosen Shapley imputation method, as described in Sec.~\ref{sec:method-estimation}) and then averages over $K$ such completed inputs to evaluate the outer expectation.

\begin{enumerate}
    \item \textbf{Trajectory Simulation:} For a completed input $\tilde{\bm{x}}$, simulate $N$ joint trajectories from the predictive model:
    $$ \bm{y}^{(1)}, \dots, \bm{y}^{(N)} \sim p(\bm{Y} \mid \tilde{\bm{x}}) $$
    
    \item \textbf{Marginal Density Estimation:} For each output component $t \in \{1, \dots, T\}$, estimate the 1D marginal density $\hat{f}_t(\cdot \mid \tilde{\bm{x}})$ and the cumulative distribution function $\hat{F}_t(\cdot \mid \tilde{\bm{x}})$ using kernel density estimation (KDE) on the scalar samples $\{y_t^{(i)}\}_{i=1}^N$.
    
    \item \textbf{Level 1 (Marginal Entropy):} Estimate the Level 1 entropy for each component $t$ directly from the fitted marginal densities e.g., via the leave-one-out plug-in estimator:
    \begin{align}
        \widehat{H}(Y_t \mid \tilde{\bm{x}}) = -\frac{1}{N}\sum_{i=1}^N \log \hat{f}_{t,-i}(y_t^{(i)} \mid \tilde{\bm{x}})
        \label{eq:app-copula-l1}
    \end{align}
    where $\hat{f}_{t,-i}$ denotes the KDE fitted without the $i$-th sample to avoid overfitting.
    
    \item \textbf{Gaussianization \& Correlation:} Map the samples to the uniform domain using the empirical CDFs, $u_t^{(i)} = \hat{F}_t(y_t^{(i)} \mid \tilde{\bm{x}})$, and subsequently to the standard normal domain via the probit function, $z_t^{(i)} = \Phi^{-1}(u_t^{(i)})$. Compute the $T \times T$ empirical correlation matrix $\hat{\bm{R}}(\tilde{\bm{x}})$ from these latent Gaussian vectors $\bm{z}^{(i)}$.
    
    \item \textbf{Level 2 (Sequential Entropy):} For each step $t > 1$, calculate the conditional variance of the latent Gaussian variable $Z_t$ given the preceding variables $Z_{<t}$ using the Schur complement:
    $$ \hat{\sigma}^2_{t \mid <t, z}(\tilde{\bm{x}}) = 1 - \hat{\bm{R}}_{t,<t} \hat{\bm{R}}_{<t,<t}^{-1} \hat{\bm{R}}_{<t,t} $$
    Use this variance to compute the Level 2 sequential entropy:
    \begin{align}
        \widehat{H}(Y_t \mid \bm{Y}_{<t}, \tilde{\bm{x}}) = \widehat{H}(Y_t \mid \tilde{\bm{x}}) + \frac{1}{2}\log\!\bigl(\hat{\sigma}^2_{t \mid <t, z}(\tilde{\bm{x}})\bigr)
        \label{eq:app-copula-l2}
    \end{align}
    
    \item \textbf{Level 3 (Joint Entropy):} Assemble the total joint entropy by summing the Level 2 components (or equivalently, Level 1 plus the copula entropy), effectively applying the chain rule from Prop.~\ref{prop:chain-rule-linkage}:
    $$ \widehat{H}(\bm{Y} \mid \tilde{\bm{x}}) = \sum_{t=1}^T \widehat{H}(Y_t \mid \bm{Y}_{<t}, \tilde{\bm{x}}) $$

    \item \textbf{Outer Expectation:} Repeat Steps 1--6 for $K$ different completed inputs $\tilde{\bm{x}} = \tilde{\bm{x}}^{(k)}$ generated according to the chosen imputation strategy (e.g., marginal, conditional, or baseline). Average the resulting entropy estimates to yield the final coalition value functions $v_H^{(t)}(S, \bm{x})$, $v_H^{(t \mid <t)}(S, \bm{x})$, and $v_H^{\text{joint}}(S, \bm{x})$.    
\end{enumerate}

\section{Details and Additional Results on the Experiments}\label{app:exp-details}

In the following sections, we describe the technical details and additional results for all experiments covered in Section~\ref{sec:experiments}. 

\textbf{General Experimental Setup.} Throughout all experiments, we compute the exact Shapley values by evaluating the respective value functions across all $2^p$ possible feature coalitions, without relying on coalition-sampling approximations. Furthermore, to evaluate the outer expectation over the out-of-coalition features (i.e., the Shapley imputation step described in Sec.~\ref{sec:method-estimation}), we consistently employ marginal imputation using background samples drawn from the respective training datasets \citep{lundberg2017}.
Finally, the code for reproducing the results can be found on \href{https://github.com/nkoenen/paper-shapley-hierarchy/}{GitHub}.

\subsection{Proof of Concept on Synthetic Gaussian DGP}\label{app:experiment_1}

For this synthetic experiment, the data is generated using a multivariate Gaussian distribution $\bm{Y} \mid \bm{x} \sim \mathcal{N}(\bm{\mu}(\bm{x}), \bm{\Sigma}(\bm{x}))$ with $T = 4$ output components and $p = 4$ input features drawn independently from a standard normal. Each feature has a distinct ground-truth effect on the parameters of the predictive distribution. The mean vector $\bm{\mu}(\bm{x})$ depends only on the first feature $x_1$ via
\begin{align*}
    \mu_t(\bm{x}) = \beta\, x_1\, \sin(2\pi t / T)\quad \text{with} \quad \beta = 1.
\end{align*}
The covariance is constructed as $\bm{\Sigma}(\bm{x}) = D(\bm{x})R(\bm{x})D(\bm{x})$, where $D(\bm{x})$ is the diagonal of marginal standard deviations
\begin{align*}
    \sigma_t^2(\bm x) = \exp \bigl(\gamma_0\, x_2 \, t + \gamma_1\, x_3 \bigr)\quad \text{with}\quad \gamma_0 = 0.5, \gamma_1 =  0.8,
\end{align*}
and $R(\bm{x})$ is a smoothly decreasing Toeplitz-style correlation matrix driven by
\begin{align*}
    r(\bm x) = \tanh \bigl(\delta_0 x_3 + \delta_1 x_4\bigr), \quad \delta_0 = -0.6, \delta_1 = 1.25, \qquad R_{ij}(\bm x) = \mathrm{sign}(r(\bm{x}))^{|i-j|}\, |r(\bm{x})|^{|i-j|^{1.5}},
\end{align*}
where the correlation decays faster than linearly with the distance $|i - j|$, so that neighbouring components remain strongly correlated but distant ones are nearly uncorrelated. Through this design, each feature has pre-defined effects on the predictive distribution: $x_1$ controls the mean prediction only and should be invisible for uncertainty attribution, $x_2$ has an increasing effect on the marginal variance for higher component indices, $x_3$ controls both variance and correlation, whereas $x_4$ only controls the correlation. 

For the calculation of the Shapley values for the different levels, we use the analytical formula for the entropies (see Sec.~\ref{sec:method-estimation-gaussian}) and use $10,000$ background samples for the Monte-Carlo integration in order to have stable results. The explanations in Figure~\ref{fig:synthetic-hierarchy} are based on the instance $\bm{x} = (1,1,1,1.75)$, i.e., an instance with very high correlations ($\rho \approx 0.92$) for demonstration purposes.

\paragraph{Visualizing the hierarchy on the covariance matrix $\bm{\Sigma}$.} For the multivariate Gaussian case (see Sec.~\ref{sec:method-estimation-gaussian}), the three levels of the hierarchy operate on different parts of the covariance matrix $\bm{\Sigma}(\bm{x})$, which provides a geometric intuition for what each level captures. Figure~\ref{fig:cov-decomposition} illustrates this for $T=4$: Level~1 depends only on the diagonal entries $\Sigma_{tt}$, Level~2 depends on the leading principal submatrices through Schur complements $\Sigma_{t \mid <t}$, and Level~3 depends on the full covariance matrix via its log-determinant. The chain-rule decomposition (Prop.~\ref{prop:chain-rule-linkage}) is reflected in this structure: the joint log-determinant decomposes into the sum of conditional log-variances, which collapses to the sum of marginal log-variances exactly when the components are independent, i.e., when $\bm{\Sigma}$ is diagonal.

\begin{figure}[!h]
    \centering
    \resizebox{\textwidth}{!}{\begin{tikzpicture}[
    >=Stealth,
    cell/.style={minimum size=0.9cm, inner sep=0pt, outer sep=0pt},
    ]
    
    \def\n{4}
    \def\s{0.95}
    
    \begin{scope}
        
        \fill[cGray!25] (0,0) rectangle ({\n*\s},{\n*\s});
        
        \foreach \i in {0,...,3}{
            \fill[cL1!35] (\i*\s,{(\n-1-\i)*\s}) rectangle ++(\s,\s);
        }
        \foreach \i in {1,...,4} {
            \node[cell, font=\footnotesize] at ({(\i-0.5)*\s}, {(5-\i-0.5)*\s}) 
                {$\Sigma_{\i\i}$};
        }
        
        \draw[thin,black!40] (0,0) grid[step=\s] ({\n*\s},{\n*\s});
        \draw[thick] (0,0) rectangle ({\n*\s},{\n*\s});
        
        \node[font=\small\bfseries, cL1, anchor=north] at ({\n*\s/2}, {\n*\s + 0.6}) 
        {Level 1: Marginal};
        
        \node[font=\footnotesize, cL1, anchor=north, align=center] at ({\n*\s/2}, {-0.1}) 
        {Decomposes $\log \Sigma_{tt}$\\[1.5pt]
            \textcolor{black!60}{\scriptsize (diagonal only)}};
        
    \end{scope}
    
    \begin{scope}[xshift=4cm]
        
        \fill[cGray!25] (0,0) rectangle ({\n*\s},{\n*\s});
        
        \fill[cL2!20] (0,{2*\s}) rectangle ({2*\s},{4*\s});
        \fill[cL2!40] ({2*\s},{2*\s}) rectangle ({3*\s},{4*\s});
        \fill[cL2!40] (0,{1*\s}) rectangle ({2*\s},{2*\s});
        \fill[cL2!60] ({2*\s},{1*\s}) rectangle ({3*\s},{2*\s});
        
        \draw[thin,black!40] (0,0) grid[step=\s] ({\n*\s},{\n*\s});
        \draw[thick] (0,0) rectangle ({\n*\s},{\n*\s});
        
        \node[font=\small\bfseries,cL2,anchor=north]
        at ({\n*\s/2},{\n*\s+0.6}) {Level 2: Sequential};
                    
        \node[cell, font=\footnotesize, black!75] at ({(1-0.5)*\s}, {(5-1-0.5)*\s}) {$\Sigma_{11}$};
        \node[cell, font=\footnotesize, black!75] at ({(2-0.5)*\s}, {(5-2-0.5)*\s}) {$\Sigma_{22}$};
        \node[cell, font=\footnotesize, black!75] at ({(1-0.5)*\s}, {(5-2-0.5)*\s}) {$\Sigma_{21}$};
        \node[cell, font=\footnotesize, black!75] at ({(2-0.5)*\s}, {(5-1-0.5)*\s}) {$\Sigma_{12}$};
        
        \foreach \i in {1,2} {
            \node[cell, font=\footnotesize, black] at ({(\i-0.5)*\s}, {(2-0.5)*\s}) {$\Sigma_{3\i}$};
            \node[cell, font=\footnotesize, black] at ({(3-0.5)*\s}, {(5-\i-0.5)*\s}) {$\Sigma_{\i3}$};
        }
        \node[cell, font=\footnotesize, black] at ({(3-0.5)*\s}, {(5-3-0.5)*\s}) {$\Sigma_{33}$};
        
        \node[font=\footnotesize, cL2, anchor=north, align=center] at ({\n*\s/2}, {-0.1}) 
        {Decomposes $\log \Sigma_{t|<t}$\\[1.5pt]
            \textcolor{black!60}{\scriptsize (Schur complements)}};
    \end{scope}

    \begin{scope}[xshift=8cm]
        
        \fill[cL3!45] (0,0) rectangle ({\n*\s},{\n*\s});
        
        \draw[thin,black!40] (0,0) grid[step=\s] ({\n*\s},{\n*\s});
        \draw[thick] (0,0) rectangle ({\n*\s},{\n*\s});
        
        \node[font=\small\bfseries,cL3,anchor=north]
        at ({\n*\s/2},{\n*\s+0.6}) {Level 3: Joint};
        
        \foreach \i in {1,...,4} {
            \foreach \j in {1,...,4} {
                \pgfmathtruncatemacro{\row}{5-\j}
                \ifnum\i=\row
                \node[cell, font=\footnotesize] 
                at ({(\i-0.5)*\s}, {(\j-0.5)*\s}) 
                {$\Sigma_{\i\i}$};
                \else
                \pgfmathtruncatemacro{\ri}{\i}
                \pgfmathtruncatemacro{\rj}{\row}
                \node[cell, font=\footnotesize, black] 
                at ({(\i-0.5)*\s}, {(\j-0.5)*\s}) 
                {$\Sigma_{\ri\rj}$};
                \fi
            }
        }
        
        \node[font=\footnotesize, cL3, anchor=north, align=center] at ({\n*\s/2}, {-0.1}) 
        {Decomposes $\log\det\bm{\Sigma}$\\[1.5pt]
            \textcolor{black!60}{\scriptsize (full matrix)}};
        
    \end{scope}
    
    \begin{scope}[xshift=12cm]
        
        
        \node[font=\small\bfseries, black!70, anchor=north] 
        at ({\n*\s/2}, {\n*\s + 0.6}) {Decomposition};
        
        \node[font=\footnotesize, cL3, anchor=south, scale = 0.9] 
        (L3eq) at ({\n*\s/2}, {\n*\s - 0.8}) {
            $H(\bm{Y} \mid \bm{x}) = \tfrac{1}{2}\log \det \bm{\Sigma}(\bm{x}) + C$
        };
        
        \draw[decorate, decoration={brace, amplitude=4pt, mirror}, 
        thick, cL3]
        (0.25, {\n*\s - 0.75}) -- ({\n*\s - 0.25}, {\n*\s - 0.75});
        
        \node[font=\footnotesize, cL2, anchor=north, align=left, scale = 0.9] 
        (L2eq) at ({\n*\s/2}, {\n*\s - 0.95}) {
            $= H(Y_1 \mid \bm{x})$\\[1pt]
            $\hphantom{=} + \; H(Y_2 \mid Y_1,\, \bm{x})$\\[1pt]
            $\hphantom{=} + \; H(Y_3 \mid Y_{<3},\, \bm{x})$\\[1pt]
            $\hphantom{=} \;\;\vdots$
        };
        
        \draw[->, semithick, black!40] 
        ({\n*\s/2}, {\n*\s - 2.5}) -- ++(0, -0.35)
        node[midway, right=2pt, font=\tiny, black!50, align=left] 
        {if $Y_t \!\perp\! Y_{<t}$};
        
        \node[font=\footnotesize, cL1, anchor=north, align=left] 
        (L1eq) at ({\n*\s/2}, {\n*\s - 3.1}) {
            $\sum_t H(Y_t \mid \bm{x})$
        };
        
    \end{scope}
    
\end{tikzpicture}}
    \caption{Geometric decomposition of the multivariate Gaussian entropy hierarchy on the covariance matrix $\bm{\Sigma}(\bm{x})$. Level~1 (left) depends only on the diagonal entries $\Sigma_{tt}$, Level~2 (centre) on the leading principal submatrices via Schur complements $\Sigma_{t \mid <t}$, and Level~3 (right) on the full matrix via its log-determinant. The right panel shows how the chain rule of entropy decomposes the joint entropy into a sum of conditional entropies, collapsing to a sum of marginals under conditional independence.}
    \label{fig:cov-decomposition}
\end{figure}

\subsection{Distributional Regression on Bike Sharing}\label{app:experiment_2}

\textbf{Dataset.} We use the UCI Bike Sharing hourly dataset \cite{bike_sharing} and restrict the analysis to the demand window 06:00--22:00 of each day. The hourly rental counts are aggregated into $T = 8$ two-hour blocks $\bm{Y} = (Y_1, \ldots, Y_8)$ that serve as the multivariate target. We use $p = 8$ features as input, summarized in Table~\ref{tab:bikesharing-features}. The continuous weather features are observed at 06:00 of the forecast day, and \texttt{Week Temp} and \texttt{Prev Count} are lagged by one day. This setting ensures that no information from the target window 06:00--22:00 leaks into the features.

\begin{table}[h]
    \centering
    \caption{Feature set used for the Bike Sharing experiment ($p = 8$).}
    \begin{tabular}{ll}
        \toprule
        Feature & Description \\
        \midrule
        \texttt{Temp 6am}   & 06:00 normalized temperature \\
        \texttt{Hum 6am}    & 06:00 normalized humidity \\
        \texttt{Wind 6am}   & 06:00 normalized wind speed \\
        \texttt{Week Temp}  & 7-day rolling mean temperature, lagged by 1~day \\
        \texttt{Prev Count} & yesterday's mean rentals per hour (scaled by $1/100$) \\
        \texttt{Workday}    & binary indicator (1 if working day) \\
        \texttt{Weekday}    & day of the week (0 = Sun, $\ldots$, 6 = Sat) \\
        \texttt{Year}       & calendar year (0 = 2011, 1 = 2012) \\
        \bottomrule
    \end{tabular}
    \label{tab:bikesharing-features}
\end{table}

\textbf{Model.} We split the data into 80\% training and 20\% test instances after a random shuffle. On the training set, we fit an NGBoost model \cite{duan2020ngboost} with a multivariate Gaussian output head $\mathcal{N}(\bm{\mu}(\bm{x}), \bm{\Sigma}(\bm{x}))$ over the $T = 8$ time blocks. The model parameterizes the mean vector and a Cholesky factor of the covariance matrix. We use the default base learner (a depth-2 decision tree) and train for up to 1500 boosting rounds with learning rate $0.01$, minibatch fraction $0.8$, and early stopping after 200 rounds without improvement on the held-out test set.

\textbf{Shapley computation.} Since NGBoost outputs a parametric multivariate Gaussian, all three entropy levels are evaluated analytically using the closed-form expressions from Sec.~\ref{sec:method-estimation-gaussian}. For the local analysis on the two instances shown in Fig.~\ref{fig:bikesharing-instances}, we use the full training set as background to approximate the expectation over the off-coalition features.

\begin{figure}[!h]
    \centering
    \includegraphics[width=1.0\linewidth]{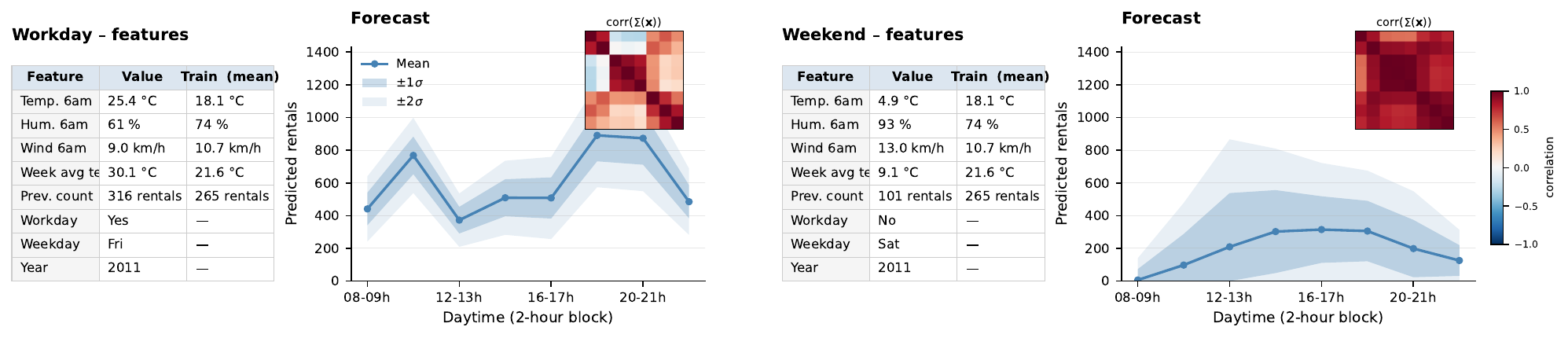}
    \caption{Summary of the feature values (including the population means) and the predictive distribution of the two instances to be explained. The heatmaps show the correlation of the predictive distribution, i.e.,  of $\bm{\Sigma}(\bm{x})$ for the two instances.}
    \label{fig:bikesharing-instances}
\end{figure}

\subsubsection{Additional Result: Comparison to SHAP}\label{app:comparison-shap}

The hierarchy in Figure~\ref{fig:appendix-shap-vs-es} captures a fundamentally different quantity than standard SHAP, since SHAP (or TreeSHAP\cite{treeshap} for NGBoost) decomposes the \emph{point prediction}, i.e., each $\phi^{\text{SHAP}}_{j,t}$ quantifies by how many rentals feature $j$ shifts the predicted mean $\hat\mu_t(\bm{x})$ at time point $t$. The games of our hierarchy, in contrast, decompose the \emph{predictive uncertainty}, i.e., $\phi_H^{(t)}$, $\phi_H^{(t|<t)}$, and $\phi_H^{\text{joint}}$ quantify how much each feature contributes to the marginal, sequentially conditional, and joint entropy of the forecast distribution, measured in nats. These are different quantities of $p(\bm{y} \mid \bm{x})$, and there is no reason to expect them to agree, not even on the ranking of features. Figure~\ref{fig:appendix-shap-vs-es} makes this distinction concrete and shows different patterns for the different levels compared to SHAP.

\begin{figure}[!h]
    \centering
    \includegraphics[width=1.0\linewidth]{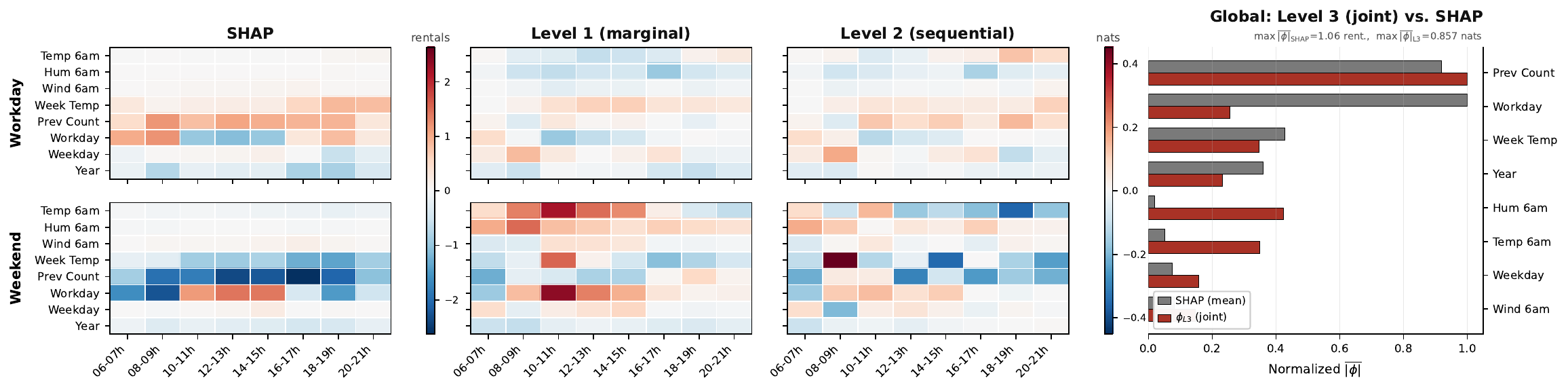}
    \caption{Local SHAP vs.\ Entropy-Shapley attributions for the workday and weekend test instances (left, centre) and global aggregation over the full test set (right). SHAP attributes the predicted mean $\hat\mu_t(\bm{x})$ in rentals, while Level~1 ($\phi_H^{(t)}$) and Level~2 ($\phi_H^{(t|<t)}$) attribute the marginal and sequentially conditional predictive entropy in nats, respectively. The right panel compares the mean of $|\phi_H^{\text{joint}}|$ across the test set with the corresponding mean absolute SHAP values, each min-max normalized within its method. This highlights how feature effects on the mean and uncertainty are naturally decoupled: for instance, \texttt{Workday} strongly drives point prediction but not the joint (daily) uncertainty, whereas the opposite is true for \texttt{Hum 6am} and \texttt{Temp 6am}.}
    \label{fig:appendix-shap-vs-es}
\end{figure}

\subsection{Cross-component Diagnostic across Forecaster Classes}\label{app:experiment_3}

This appendix summarizes the setup and additional analyses for the sample-based estimation and cross-component diagnostic experiment of Sec.~\ref{sec:experiments-deepar}.

\textbf{Dataset.} We use the UCI Electricity hourly load dataset~\cite{trindade2015electricity}, restricted to the first $100$ series in order to keep the training time feasible. The hourly electricity load is aggregated into $T = 12$ two-hour blocks per day, i.e., blocks of width 2h with a prediction window covering a whole day. For each series, we scaled the values by their per-series mean, so the analyzed target is on a relative-to-mean scale. The multivariate forecast is of dimension $T = 12$ (i.e.~24h), the history for context is $84$ blocks (i.e.~one week), and we hold out the last $360$ blocks per series as a validation period from which the test origins are drawn. In addition, we add calendar features (hour, weekday, and month) as exogenous variables.

\textbf{Models.} We train DeepAR~\cite{salinas2020deepar} from scratch with three different one-step output distributions: Gaussian, Student-$t$, and log-normal. We used a default architecture with $2$ RNN layers, a hidden size of $64$ and trained it for up to $50$ epochs with Adam (learning rate $10^{-3}$,
batch size $256$) and early stopping on the validation set. On the other hand, we use \texttt{Chronos-T5-base}\footnote{See Hugging Face: \url{https://huggingface.co/amazon/chronos-t5-base}}~\cite{ansari2024chronos}
zero-shot, i.e., without any fine-tuning. The model tokenizes the history context with its quantization grid (4096 token bins) and decodes $T = 12$ tokens autoregressively.

\paragraph{Shapley setup (Sec.~\ref{sec:experiments-deepar}).} The historical context plus three calendar features yields $P_{\text{feat}} = 87$ scalar inputs per origin. To keep the Shapley computation tractable, we group these into $p = 6$ players: one player per calendar feature (hour, weekday, month), and three lookback-window players capturing the recent past (last 24 hours), the middle past (the preceding three days), and the older past (the remainder of the week-long context). For each model and forecast origin ($100$ in total), we enumerate all $2^p = 64$ coalitions and impute features outside the coalition by drawing $K = 25$ background contexts from the training period, ensuring that no validation information leaks into the value function. For each coalition, we sample $N = 1000$ joint trajectories of length $T$ and evaluate the three entropy levels with the sample-based estimators of Sec.~\ref{sec:method-estimation-samples} (Gaussian copula using KDE marginals and kNN with $k=5$). Shapley values are obtained exactly from the $2^p$ value-function evaluations.

\subsubsection{Additional Results: Validation against the Model-based Reference}\label{app:deepar-validation}

We train DeepAR~\cite{salinas2020deepar} on the dataset using Gaussian, Student-$t$, and log-normal likelihoods for the one-step conditional distributions (Sec.~\ref{sec:method-estimation-conditional}). DeepAR's autoregressive factorization provides the conditional densities in closed form, which can be marginalized over trajectory samples of $\bm{Y}_{<t}$ to obtain the Level~2 reference $v^{(t|<t)}_H(S,\bm{x})$, with Level~3 following from Prop.~\ref{prop:chain-rule-linkage}. This reference is itself an estimator in the strict sense, but uses model-internal closed-form conditionals rather than relying on entropy estimation from samples. It therefore serves as the strongest available baseline. We compare the sample-based estimators from Sec.~\ref{sec:method-estimation-samples} against this reference, using a parametric Gaussian fit (PG), the Gaussian copula with KDE marginals (GC-KDE), and kNN. In addition, we include a Gaussian copula variant (GC-Param) whose marginals are obtained from the marginalized model conditionals, which isolates the copula component of the model bias. For this validation, we use the setup described above, but vary the trajectory budget $N \in \{50, 100, 250, 500, 1000, 2000\}$ to study the convergence behavior of each estimator. Figure~\ref{fig:deepar_val} reports the mean absolute error on the value function (top row) and on the resulting Shapley values (bottom row) as a function of $N$. The marginal imputation uses $K=25$ random background origins, fixed across all settings to eliminate Monte-Carlo integration variance from the comparison. Error bars are computed over $100$ test instances.

For the Gaussian likelihood, PG and the copula-based approaches are all accurate, while kNN improves more slowly with $N$, reflecting the cost of imposing no parametric structure. The small gap between GC-Param and GC-KDE quantifies the additional cost of estimating marginals from samples. The non-Gaussian likelihoods expose the effect of misspecification. For Student-$t$, PG's error grows with $N$, indicating convergence to a misspecified Gaussian target: the downward bias of the empirical log-determinant masks the misspecification at small $N$ and reveals it as $N$ grows. The copula-based estimators substantially reduce this effect, with the residual GC-Param error isolating the copula component, although a Gaussian copula can itself be misspecified under strong tail dependence. For log-normal, errors are smaller throughout, indicating that the severity of misspecification depends on the predictive distribution. The bottom row shows how these errors propagate to the attributions: common offsets across coalitions partly cancel in marginal contributions, while coalition-specific errors remain.

\begin{figure}[!h]
    \centering
     \includegraphics[width=1\textwidth]{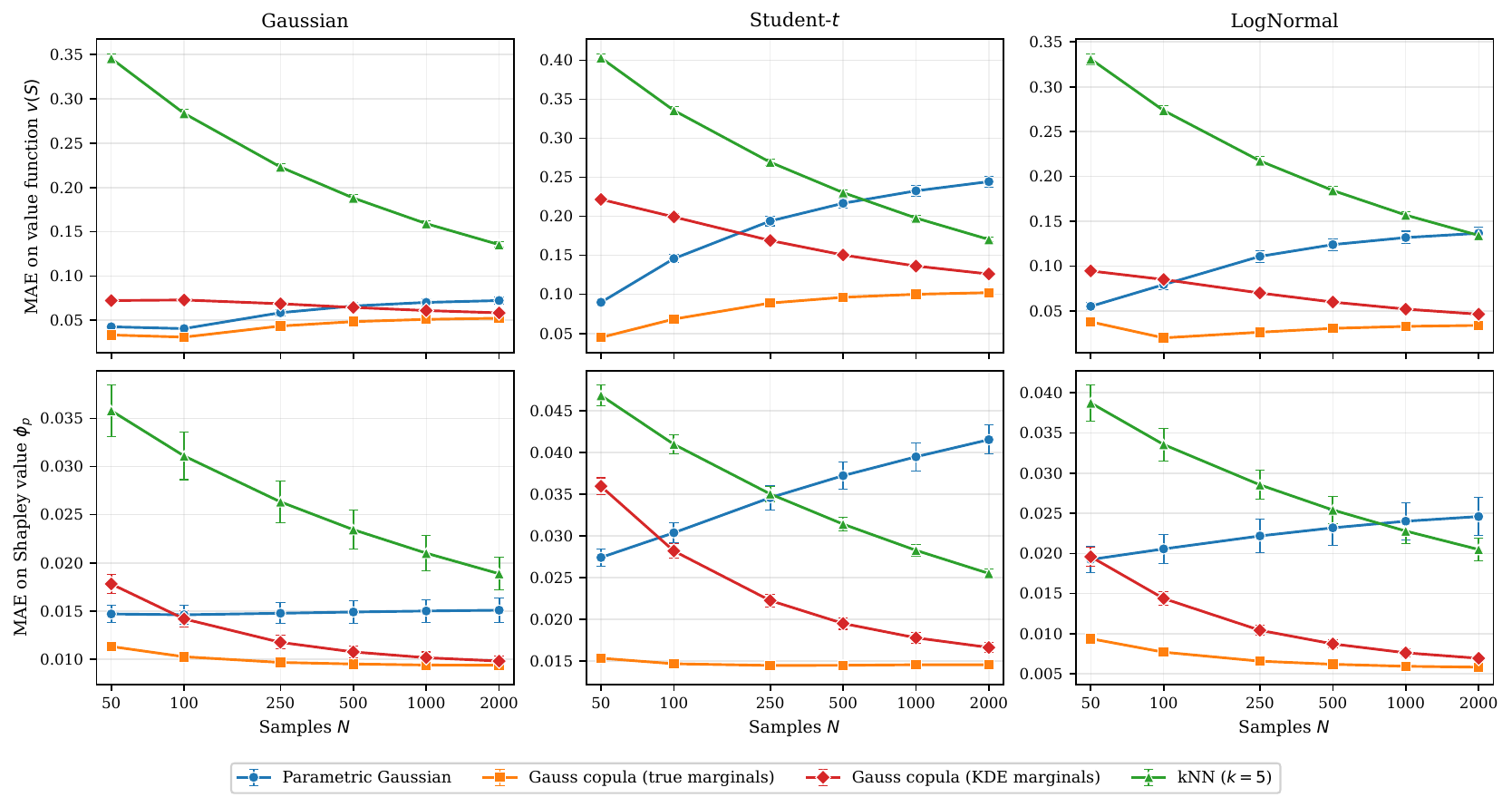}
    \caption{Sample-based estimation against the parametric model-based Level~2 reference $v^{(t|<t)}_H(S,\bm{x})$, for DeepAR with Gaussian, Student-$t$, and log-normal likelihoods. \textbf{Top:} MAE on the value function. \textbf{Bottom:} MAE on the resulting Shapley values.}
        \label{fig:deepar_val}
\end{figure}

 \section{Computational Details}\label{app:comp-details}

Experiments were conducted using a 64-bit Linux platform running Ubuntu 22.04 LTS with two AMD EPYC Genoa 9534 64-Core processors (128 cores, 256 threads total), 1.5 terabytes of RAM, and eight NVIDIA RTX 6000 Ada Generation GPUs (each with 48 GB memory). Each individual run used a single GPU. Only DeepAR training and Chronos inference require the GPU, whereas analytical and sample-based Shapley computations, including NGBoost training, run on CPU with process-level parallelism via \texttt{joblib}. Additionally, the memory-expensive notebooks implement a chunk-wise calculation to prevent out-of-memory failures. In this computational setting, the synthetic DGP (Sec.~\ref{sec:experiments-poc}) and bike sharing experiment (Sec.~\ref{sec:experiments-bikesharing}) run in minutes. For the DeepAR experiments, training the DeepAR architecture across the three likelihoods takes approximately 2 hours in total, and the validation experiment takes approximately 10 hours (which can be reduced by using fewer instances). The Chronos experiment also takes approximately one hour. All experiments also include a “quick run” option, in which the code is executed at the cost of accuracy.

 \section{Impact Statement}\label{app:impact-statement}

This work focuses on a foundational post-hoc explanation methodology not tied to specific deployments. Any potential societal impacts, positive or negative, would therefore be indirect. For example, by improving transparency of probabilistic forecasters and decomposing predictive uncertainty into structurally distinct components, the framework can strengthen informed decisions in risk-sensitive applications such as energy planning or weather forecasting. On the negative side, unintended misuse where the model-relative Shapley attributions are misinterpreted as causal explanations could instill misplaced confidence in model reliability and have societal impacts when occurring in a safety-critical setting. However, this risk is not specific to this work, but inherent to the broader class of Shapley-based attribution methods.

\end{document}